\newtheorem{thm}{Theorem}[section]
\newtheorem{prop}[thm]{Proposition}
\newtheorem{cor}[thm]{Corollary}
\theoremstyle{definition}
\newtheorem{defn}[thm]{Definition}
\newtheorem{ex}[thm]{Example}
\theoremstyle{remark}
\newtheorem{rem}[thm]{Remark}
\newcommand{\cA}{\mathcal{A}}
\newcommand{\E}{\mathbb{E}}
\newcommand{\cM}{\mathcal{M}}
\newcommand{\N}{\mathbb{N}}
\renewcommand{\P}{\mathbb{P}}
\newcommand{\R}{\mathbb{R}}
\newcommand{\cX}{\mathcal{X}}
\newcommand{\cY}{\mathcal{Y}}
\newcommand{\indep}{\mbox{\,$\perp\!\!\!\perp$\,}}
\title[]{Robust Estimation of Tree Structured Models}
\author[]{Marta Casanellas}
\address{Institut de Matem\`{a}tiques de la UPC-BarcelonaTech (IMTech), Universitat Polit\`{e}cnica de Catalunya, Centre de Recerca Matemàtica, Barcelona, Spain}
\email{marta.casanellas@upc.edu}
\author[]{Marina Garrote-L\'{o}pez}
\address{Department of Mathematics, Universitat Polit\`{e}cnica de Catalunya, Barcelona, Spain}
\email{marina.garrote@upc.edu}
\author[]{Piotr Zwiernik}
\address{Department of Economics and Business, Universitat Pompeu Fabra, Barcelona, Spain}
\email{piotr.zwiernik@upf.edu}
\keywords{Learning tree structure, noisy data on trees, latent tree models}
\subjclass[2020]{}
\date{\today}
\begin{document}
\begin{abstract}
Consider the problem of learning undirected graphical models on trees  from corrupted data. Recently \cite{katiyar2020robust} showed that it is possible  to recover trees from noisy binary data up to a small equivalence class of possible trees. Their other paper on the Gaussian case follows a similar pattern. By framing this as a special phylogenetic recovery problem we largely generalize these two settings. Using the framework of linear latent tree models we discuss tree identifiability for binary data under a continuous corruption model. For the Ising and the Gaussian tree model we also provide a characterisation of when the Chow-Liu algorithm consistently learns the underlying tree from the noisy data.
  \end{abstract}

\maketitle

\section{Introduction}

Probabilistic graphical models form a popular family of statistical models used to describe dependence structure in multivariate scenarios. A particularly simple instance of a graphical model is when the  underlying graph is a tree. Despite its simplicity, these models can be useful in image/video classification, for exploratory analysis in high-dimensional settings, and as first approximations in more complicated systems; see \cite{badrinarayanan2013semi,bresler2020learning,d2003tree,katiyar2019robust,katiyar2020robust,nikolakakis2018predictive} and references therein.

Parameter estimation, inference, and structure learning is particularly easy in the case of tree models. Given a random sample from a tree distribution, there is an efficient way of finding the maximum likelihood tree given by Chow and Liu \cite{chow1968approximating}. Chow and Liu showed that finding the maximum likelihood tree can be formulated as a maximum weight spanning tree problem based on mutual informations --- a task for which highly efficient algorithms  exist.

Following \cite{katiyar2020robust} we consider the case when the observed random sample is a corrupted version of the original random sample. The aim of this paper is to study the most general situation in which recovery of  the true tree is possible. Our first main result is a generalization of the identifiability result in \cite{katiyar2020robust} to the situation of arbitrary discrete variables with an arbitrary but equal number of states. This is Theorem~\ref{th:main}, where we prove that the original tree can be recovered from a noisy distribution, up to label swapping of certain nodes. Then we specify (mild) conditions on the noise which guarantee that the complete original tree is identifiable from a  noisy distribution (see Theorem \ref{th:identifyexactly}).

Our approach relies on the observation that the distribution of the corrupted data lies in a latent tree model. Then standard identifiability results for phylogenetic models can be employed \cite{chang1996full,semple2003phylogenetics}. A similar observation has been applied for the binary data case in \cite{nikolakakis2018predictive}, where high-probability sample complexity guarantees for exact structure recovery were provided.

This insight allows to study the tree recovery problem in much detail. For example, there is some debate on whether the Chow-Liu algorithm is applicable in the case of noisy data. Some sufficient conditions have been studied in \cite{nikolakakis2018predictive}. Here, in Theorem~\ref{th:CL}, we give sufficient and necessary conditions in the case of the binary data and with a very similar argument in the Gaussian case.

To generalize from the standard discrete setting we discuss the linear latent tree models \cite{anandkumar2011spectral,zwiernik2018latent}. This allows to generalize our main result to the (multivariate) Gaussian case and beyond; see Theorem~\ref{th:mainlinear}. In particular, in Section~\ref{sec:conntinuous} we present a simple model for discrete data with a continuous noise model and we illustrate with simulations how tree recovery performs.

The paper is organized as follows. In Section~\ref{sec:problem} we define tree model and the noisy tree distributions. In Section~\ref{sec:phylo} we define latent tree models, we recall classical results on structure identifiability, and we show how this problem links to the original problem of recovering the underlying tree from noisy data. The main results related to this problem are stated in Section~\ref{sec:identify}. In Section~\ref{sec:treelearning} we further build upon these results by studying consistency of the Chow-Liu algorithm and by providing numerical examples of how standard phylogenetic recovery methods perform in recovering the true tree $T^*$. The results in Section~\ref{sec:identify} are further generalized to linear tree models in Section~\ref{sec:linear}.

There are many types trees that appear in this paper. For reader's convenience we summarize our notation:
\begin{enumerate}
	\item [$T$] a general tree,
	\item [$T^*$] the true tree in the underlying tree distribution,
	\item [$T^e$] the tree obtained from $T^*$ by adding a copy of each vertex and linking it to its counterpart in $T^*$.
	\item [$\overline T$] the tree obtained from $T$ by suppressing all the degree two nodes (see Definition~\ref{def:contract}); $\overline{T^e}$ is a special case of this notation with $T=T^e$.
\end{enumerate}

\section{Problem formulation}\label{sec:problem}

In this section we set-up our problem in the case of discrete data. This will be extended later in Section~\ref{sec:linear} to linear models on trees with the Gaussian model as a special case.

\subsection{Tree distributions}\label{sec:treeMarkov}
Let $X=(X_1,\ldots,X_d)$  be a random vector with values in a finite product space $\cX=\prod_{i=1}^d \cX_i$. Without loss of generality we assume $\cX_{i}=\{0,\ldots,r_i-1\}$, $r_i\in \N$, $r_i\geq 2$. Let $T^*$ be  a tree with vertices $V=\{1,\ldots,d\}$, representing the components of the random vector $X=(X_1,\ldots,X_d)$, and with edges $E^*$. The distribution $p$ of $X$ is \emph{Markov with respect to} $T^*$ if
$$
p(x)\;=\;\prod_{ij\in E^*} \phi_{ij}(x_i,x_j)\qquad\mbox{for all } x\in \cX,
$$
where $\phi_{ij}:\cX_i\times \cX_j\to (0,+\infty)$ are some functions, called potentials. By the Hammersley-Clifford theorem we get then conditional independence characterization in terms of separation  in the tree,  that is, $X_i\indep X_j|X_C$ if $C\subset V$ separates $i$ and $j$ in $T^*$; c.f. Theorem~3.9 in \cite{lau96}. In the binary  case, when $\cX_i=\{0,1\}$, we equivalently write
$$
p(x)\;=\;\frac{1}{Z(h,\beta)}\exp\Big\{\sum_{i\in V}  h_i x_i+\sum_{ij\in E^*}\beta_{ij}x_i x_j\Big\}\qquad x\in \{0,1\}^d,
$$
where $h_i,\beta_{ij}\in \R$ and $Z(h,\beta)$ is the normalizing constant. The corresponding model is called the Ising model on $T^*$.


The set of distributions that are Markov with respect to $T^*$ can be equivalently described by the following Markov process on the tree $T^*$. Fix any \emph{inner} node $\rho\in V$,  call it the root, and direct all edges of $T^*$ away from $\rho$. Denote by $p_\rho$ the marginal distribution of $X_\rho$ and, for each edge $u\to v$, let $M^{uv}$  be the matrix representing the conditional distribution $p_{v|u}$ of $X_v$ given $X_u$; $M_{x_u,x_v}^{uv}=p_{v|u}(x_v|x_u)$ for $x_u\in \cX_u$, $x_v\in \cX_v$. Then
$$
p(x)\;=\;p_\rho(x_\rho)\prod_{u\to v}p_{v|u}(x_{v}|x_{u})\qquad\mbox{for all }x\in \cX.
$$
Thus, fixing a directed version of $T^*$ fixes a parameterization of the set of all distributions Markov to $T^*$ making it into a parametric statistical model.
\begin{rem}
For the above argument $\rho$ did not have to be an inner node of $T^*$. The fact that $\rho$ is assumed to be an inner node will simplify our theory in later sections.
\end{rem}

Suppose that $X\in \cX$ has distribution $p$ that is Markov  with  respect to $T^*$. Given a random sample from $p$, the goal is to recover the underlying tree. As we mentioned in the Introduction, this problem can be solved very efficiently both from the computational and statistical point of view by the Chow-Liu algorithm \cite{chow1968approximating}, which outputs the tree that maximizes the likelihood function. Maximizing other functionals like AIC or BIC is also possible \cite{edwards2010selecting}. As we see next, the problem of structure recovery becomes more complicated in presence of corrupted data, which is the focus of this paper.

\subsection{Noisy tree  distributions}

Assume now that the vector $X$ is not observed directly. Instead, we observe $X^e=(X_1^e,\ldots,X_d^e)$, a corrupted version of $X$. Here the only crucial assumption is that for every  $i\in V$ the distribution of $X_i^e$ depends on $X$ only  through the value of $X_i$.

The  simplest  corruption model is a direct generalization of the one used for the Ising models in \cite{katiyar2020robust}: $X_i$ gets corrupted with some probability $q_i$ and, if that happens, the corrupted value takes uniformly any of the remaining values. In other words, for every $i=1,\ldots,d$ and $k,l\in \cX_i$
$$
\P(X_i^e=l|X_i=k)\;=\;\begin{cases}
	1-q_i & \mbox{if }l=k,\\
	\tfrac{q_i}{r_i-1} & \mbox{if }l\neq k.
\end{cases}
$$
It is sensible to assume that $q_i$ is relatively small but our main results do not rely on this assumption. In  fact, we consider a much  more general corruption model given, for each $i\in V$, by any square stochastic matrix $M^i=[p_{kl}^i]$ with
\begin{equation}\label{eq:gencorr}
\P(X_i^e=l|X_i=k)\;=\;p_{kl}^i.	
\end{equation}

Our problem can be therefore formulated as follows. Given the distribution of the corrupted version $X^e$ of $X$ recover (i) the underlying tree $T^*$, and (ii) the underlying distribution of $X$. As we argue in the next section, this problem can be naturally formulated in the language of latent tree models. The resulting links with phylogenetics provide new insights and a rich resource of relevant results that establish conditions under which $T^*$ can be recovered from the noisy data.

\section{Link to phylogenetics}\label{sec:phylo}

\subsection{Latent tree models}

Given a tree $T=(W,E)$ with nodes $W$ and edges $E$, the underlying tree model for the random vector $Y$ with values in the discrete space $\cY=\prod_{i\in W}\cY_i$ is the set of all distributions over $\cY$ that are Markov with respect to $T$ as defined in Section~\ref{sec:treeMarkov}. Suppose now that $$L=\{1,\ldots,d\}\subset W$$ is the set of vertices of $T$ corresponding to the leaves of $T$ (vertices of degree one). The set of marginal distributions of $X:=Y_L$ is called the \emph{latent tree model} over $T$ and denoted $M_\cY(T)$. For a more detailed discussion see Section 1.1 in \cite{zwiernik2018latent}.


In general, the theory of latent tree models can be quite complicated; see \cite{zhang2004hierarchical}. In this paper we restrict to the most tractable case where the cardinality of each $\cY_i$ is the same, $|\cY_i|=r$ for every $i\in W$. In this case the corresponding latent tree model is often called the \textit{general Markov model} and we denote it by $M_r(T)$.

Our problem of recovering $T^*$ and the underlying distribution from the noisy observations $X^e$ is very closely connected to the classical problem of recovering $T$ in a latent tree model $M_r(T)$. Before we explain this connection in Section~\ref{sec:noisyaslatent}, we first recall the corresponding classical results following \cite{chang1996full}.
\begin{defn}
A class of matrices $\cM$ is \emph{reconstructible from rows} if for each $M\in \cM$ and each permutation matrix $P\neq I$, we have $PM \notin \cM$.	
\end{defn}
A natural subset of  square  matrices that is reconstructible from rows is obtained by restricting the diagonal entries to dominate the other entries in the corresponding column. We also  formulate the following assumptions on a latent tree model $M_r(T)$:
\begin{enumerate}
	\item [(A0)] $T$ has no nodes of degree two.
	\item [(A1)] The root distribution satisfies $p_\rho(x_\rho)>0$ for all $x_\rho\in \{0,\ldots,r-1\}$.
	\item [(A2)] For each edge $u\to v$ the transition matrix $M^{uv}=[p_{v|u}(x_v|x_u)]$ is invertible and it is not a permutation matrix.
\item [(A3)] For each edge $u\to v$ the transition matrix $M^{uv}$ is  reconstructible from rows.
\end{enumerate}

The following result follows immediately from Proposition~3.1 and Theorem~4.1 in \cite{chang1996full}.
\begin{thm}\label{th:chang}
Under the assumptions (A0)-(A2) on $p\in M_r(T)$ the underlying tree $T$ is uniquely identified from $p$. If, in addition, (A3) holds then the underlying parameters are identifiable too.
\end{thm}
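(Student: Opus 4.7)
The plan is to reduce identifiability of $T$ to reconstruction of a tree from an additive metric on its leaves, and then to resolve the residual label ambiguities using (A3).

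For the first step, for every pair of leaves $i,j\in L$ I would form the joint distribution matrix $P^{ij}\in\R^{r\times r}$ with entries $\P(X_i=k,X_j=l)$ and the diagonal matrix $D_i$ with the marginal of $X_i$ on its diagonal. Following Lake's paralinear construction, define
$$
d(i,j)\;:=\;-\log|\det P^{ij}|+\tfrac{1}{2}\log(\det D_i\det D_j).
$$
Pick any node $u$ on the path from $i$ to $j$; conditional independence of $X_i$ and $X_j$ given $X_u$ gives $P^{ij}=(M^{ui})^\top D_u M^{uj}$, and taking determinants together with the identity $\det D_a\det M^{ab}=\det D_b\det M^{ba}$ for adjacent nodes $a,b$ shows that $d(i,j)=\sum_{e\in\mathrm{path}(i,j)} w(e)$ for weights $w(e)$ depending only on the edge $e$. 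Assumption (A1) makes every $\det D_i$ strictly positive and (A2) makes every $\det M^e$ nonzero, so $d$ is finite; using the standard fact that a stochastic matrix has $|\det|\le 1$ with equality iff it is a permutation, (A2) also forces $w(e)>0$. Hence $d$ is a strictly positive additive tree metric on $L$.

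For the second step, the classical four-point/Buneman theorem recovers from such a metric a unique tree with leaf set $L$, strictly positive edge weights, and no degree-two interior vertices. Combined with (A0), this identifies $T$ exactly, giving the first half of the theorem. For the third step I would root $T$ at an interior vertex and process the tree from the leaves inwards. At each interior node $v$, choose three leaves $a,b,c$ lying in three distinct subtrees hanging off $v$; the three-way distribution $\P(X_a,X_b,X_c)$ admits a simultaneous diagonalisation (an instance of Kruskal's/Chang's tensor decomposition) that recovers the marginal of $X_v$ and the transition matrices on the three edges incident to $v$ uniquely up to a common permutation of the $r$ latent states of $X_v$. Assumption (A3) pins down this permutation: any non-trivial row permutation of a transition matrix destroys reconstructibility from rows. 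Iterating over all interior nodes yields the root distribution and every $M^e$.

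The main obstacle I expect is not any single step but the global bookkeeping. The tensor-decomposition step is local to each hidden node, and it must be checked that the residual permutation symmetries at different hidden nodes can be resolved consistently across the whole tree, and that (A3) rules out every joint relabelling that would leave the observable distribution invariant. This coherence is precisely what Propositions~3.1 and~4.1 of \cite{chang1996full} establish, which is why the theorem can be cited rather than reproved from scratch.
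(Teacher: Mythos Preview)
Your sketch is correct, but note that the paper does not actually prove this theorem: it simply states that the result ``follows immediately from Proposition~3.1 and Theorem~4.1 in \cite{chang1996full}'' and moves on. What you have written is a faithful outline of Chang's own argument---the paralinear/log-determinant distance yielding an additive tree metric, Buneman's four-point condition recovering $T$, and then the three-leaf tensor diagonalisation to recover the transition matrices up to a permutation that (A3) eliminates. You recognise this yourself in your final paragraph.

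It is worth remarking that the tree-metric half of your argument is exactly the construction the paper develops independently in Section~\ref{sec:distance} (the quantities $\tau_{uv}$, the path-product formula~\eqref{eq:qij}, and Theorem~\ref{th:buneman}), which it uses later for the noisy-data results rather than as a proof of Theorem~\ref{th:chang}. So your first two steps are not just correct but are the same machinery the paper relies on elsewhere; only the parameter-identification step via tensor decomposition is genuinely deferred to Chang's paper, and your outline of it---including the caveat about globally coherent resolution of the per-node permutation ambiguities---is accurate.
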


\begin{rem}
Our formulation of this	result slightly differs from the original of Joseph  T. Chang \cite{chang1996full}. In his version of (A1) he assumes that  the positivity  condition holds for some node and we require this condition specifically for the root. Together with (A2) both versions are equivalent.\end{rem}

%

\subsection{Trees with degree-two nodes}

Identifiability results developed in phylogenetics, like the one above, play a crucial role in this paper. In our situation however it is important to consider the case where the condition (A0) does not hold. Then, the tree can never be recovered uniquely. For a simple illustration consider two models, one of a single edge $\bullet\!-\!\bullet$ and one on the chain $\bullet\!-\!\circ\!-\!\bullet$, where the middle vertex represents a latent variable. In case all three variables involved have $r$ states, the family of distributions over the solid nodes in both models is the same.
\begin{defn}\label{def:contract}
If $u,v$ are two nodes in $T$ of degree different than two and such that each node on the unique path between them has degree two then by \emph{ suppressing these degree two nodes} we mean removing all these intermediate nodes together with all adjacent edges and adding a direct edge between $u$ and $v$.
\end{defn}

Denote by $\overline T$ the tree obtained from $T$ by suppressing all the degree two nodes. The following result can be found, for example, in Section~5.3.4 in \cite{LTbook}.
\begin{prop}\label{prop:suppress}
For any tree $T$, $M_r(T)\;=\;M_r(\overline T)$. If $p\in M_r(T)$ satisfies (A1)-(A3) then the same distribution in $M_r(\overline T)$ satisfies (A0)-(A3).
\end{prop}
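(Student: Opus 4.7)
The plan is to reduce everything to the case of suppressing a single degree-two node and then iterate. Fix an internal degree-two node $w$ of $T$ with neighbors $u$ and $v$, and let $T'$ be the tree obtained from $T$ by deleting $w$ and the edges $uw, wv$ and inserting the edge $uv$. I will show $M_r(T)=M_r(T')$ and that assumptions (A1)-(A3) on $T$ imply (A0)-(A3) on $T'$; iterating over all degree-two nodes then yields the proposition for $\overline T$.

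For the inclusion $M_r(T)\subseteq M_r(T')$, root $T$ at any inner node $\rho\neq w$ and orient edges so that (without loss of generality) $u$ is the parent of $w$ and $v$ is its child. Given a Markov factorization on $T$ with root marginal $p_\rho$ and transition matrices $\{M^{st}\}$, marginalizing out the latent variable $X_w$ replaces the two edges $u\to w$, $w\to v$ by a single edge $u\to v$ carrying the matrix product $M^{uv}:=M^{uw}M^{wv}$, while leaving all other factors unchanged. Because $w$ is internal, and hence latent, the distribution over the leaves $L$ is preserved. Conversely, for $p\in M_r(T')$ with transition matrix $N$ on the edge $uv$, set $M^{uw}:=N$ and $M^{wv}:=I$; this gives a Markov factorization on $T$ whose leaf marginal matches $p$, proving $M_r(T')\subseteq M_r(T)$.

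For the preservation of the assumptions: (A0) holds for $\overline T$ by construction. For (A1), choose the root $\rho$ of $\overline T$ to be a vertex of $T$ that already had degree $\geq 3$ (any inner vertex of $\overline T$ qualifies, since suppressing degree-two nodes does not change the degree of the remaining vertices); then $p_\rho$ is unchanged. If $T$ is a path so that no such $\rho$ exists, $\overline T$ is a single edge and one argues directly that the full marginal is positive by propagating (A1) through the invertible transitions of (A2), since $p_v = p_\rho\, M^{\rho\cdots v}$ has full support once $p_\rho$ does and $M^{\rho\cdots v}$ is invertible stochastic (no zero column). For (A2), invertibility of $M^{uv}=M^{uw}M^{wv}$ is immediate. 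That $M^{uv}$ is not a permutation matrix follows from the fact that an invertible stochastic matrix is a permutation matrix if and only if its inverse is also stochastic: if $M^{uv}=P$ were a permutation, then $M^{wv}=(M^{uw})^{-1}P$ would have to be stochastic, but right-multiplication by $P$ merely permutes the columns of $(M^{uw})^{-1}$, which has a negative entry because $M^{uw}$ is not a permutation, contradicting non-negativity of $M^{wv}$.

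The subtle step is (A3). Under the natural reading that each $M^{uv}$ is assumed to lie in a fixed class $\cM$ of stochastic matrices that is reconstructible from rows, what is needed is that the class $\cM$ is closed under matrix multiplication, so that the composite transition on a suppressed path remains reconstructible from rows. I expect this closure-under-products check to be the main technical hurdle; for the canonical classes used in this setting (for instance, stochastic matrices whose diagonal strictly dominates each column) it is verified by a direct computation, and one can alternatively argue that a row-permuted composite $PM^{uw}M^{wv}$ arises from a row permutation of $M^{uw}$ and hence falls outside $\cM$ by (A3) applied to $M^{uw}$. This verification together with the three points above completes the inductive step, and iterating over all degree-two nodes of $T$ gives the proposition.
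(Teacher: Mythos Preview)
The paper does not actually prove this proposition; it cites Section~5.3.4 of \cite{LTbook} and only records, in Remark~\ref{rem:contract}, the key construction (composite transition matrices along a suppressed path are products of the original transition matrices). Your single-node-suppression-plus-iteration argument is exactly this standard route, and the verification of $M_r(T)=M_r(T')$ and of (A0)--(A2) is correct. In particular, your argument that a product of invertible non-permutation stochastic matrices cannot be a permutation is valid: a nonnegative matrix with nonnegative inverse is monomial, hence a stochastic such matrix is a permutation.

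The genuine gap is your handling of (A3). Your ``alternative'' argument does not go through: from $PM^{uw}\notin\cM$ you cannot conclude $(PM^{uw})M^{wv}\notin\cM$ without knowing how $\cM$ behaves under right multiplication, so this line of reasoning is circular. What the proposition really needs---and what the cited reference makes explicit---is either (i) that the fixed class $\cM$ is closed under products of stochastic matrices, so that the composite $M^{uw_1}\cdots M^{w_kv}$ again lies in $\cM$; or (ii) the weaker per-matrix reading of (A3) as ``all rows of $M^{uv}$ are distinct,'' in which case the result follows immediately from (A2): if $A$ has distinct rows and $B$ is invertible, then $AB$ has distinct rows. You should state whichever interpretation you are using and complete the verification accordingly; as written, your (A3) step is a promissory note rather than a proof. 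Note also that the ``canonical'' class of column-diagonally-dominant stochastic matrices is not obviously closed under products, so option~(i) is not free even in that example.
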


\begin{rem}\label{rem:contract}
	Although the models $M_r(T)$ and $M_r(\overline T)$ are equal by Proposition~\ref{prop:suppress}, their parametrizations are not, as generally $T$ has more vertices and edges than $\overline T$. However, if $T$ is rooted at any node of degree different than two, the parameters of $M_r(\overline T)$ can be easily recovered from the parameters of $M_r(T)$. In both cases the root distribution $p_\rho$ is the same. For each edge $u\to v$ in $\overline{T}$ we also have the same transition matrix $M^{uv}$ unless $u \rightarrow v $  is an edge in $\overline{T}$ that has been obtained by suppressing degree two nodes $w_1,\ldots,w_k$ in a path $u \rightarrow w_1\rightarrow\cdots\rightarrow w_k  \rightarrow v$; in this case the transition matrix $M^{uv}$ is the product of transition matrices in that path, $M^{uw_1}\cdots M^{w_k v}$.
\end{rem}

\subsection{Distance based methods}\label{sec:distance}

The tree structure recovery in Theorem~\ref{th:chang} can be in fact done using only pairwise marginal distributions and this fact has important consequences for the rest of the paper. For any edge $u\to v$ denote by $P^{uv}$ the $r\times r$ matrix of the marginal distribution of $(X_u,X_v)$, and by $P^{uu}$ a diagonal matrix with the marginal distribution of $X_u$ on the diagonal. For any two vertices $u,v$ let
\begin{equation}\label{eq:u}
\tau_{uv}\;:=\;\frac{\det(P^{uv})}{\sqrt{\det(P^{uu}P^{vv})}},\end{equation}
where the denominator is non-zero if all marginal distributions are strictly positive. By essentially the same argument as in~\cite[Theorem 8.4.3]{semple2003phylogenetics} we obtain the following path-product formula
\begin{equation}\label{eq:qij}
\tau_{ij}\;\;\;=\;\;\prod_{(u,v)\in \overline{ij}} \tau_{uv}\qquad\mbox{for all }i,j\in W,
\end{equation}
where $\overline{ij}$ denotes the unique path between $i$ and $j$ in $T$.
\begin{rem}
In the case of binary variables, $\det P^{ij}={\rm cov}(X_i,X_j)$, $\det(P^{ii})={\rm var}(X_i)$ and so $\tau_{ij}$ is the correlation ${\rm corr}(X_i,X_j)$. 	
\end{rem}
It can be shown (c.f. Section~2.2 in \cite{zwiernik2018latent}) that
$$
	\tau_{uv}^2\;=\;\det M^{uv} \det M^{vu}.
	$$
	Because both $M^{uv}$ and $M^{vu}$ are stochastic matrices, all their eigenvalues lie in the unit circle. In particular, $\tau_{uv}\in [-1,1]$ and it is equal to $\pm 1$ precisely when $M^{uv}$ is a permutation matrix, or in other words, if $X_u$ and $X_v$ are functionally related. With assumptions (A1) and (A2) we have thus that $\tau_{uv}^2\in (0,1)$. Define $$d_{uv}:=-\log(\tau_{uv}^2)>0$$ then (\ref{eq:qij}) implies that
\begin{equation}\label{eq:dij}
d_{ij}\;\;\;=\;\;\sum_{uv\in \overline{ij}} d_{uv}\qquad\mbox{for all }i,j\in W.
\end{equation}
In other words $d_{uv}$ represent lengths of edges in the tree $T$ and $d_{ij}$ are then distances between vertices calculated by summing the lengths of edges on the unique path between them in $T$. The collection of distances between the leaves $D=[d_{ij}]_{i,j\in L}$ is called a \emph{tree metric}.

The following classical result assures that  $T$ can be recovered from the underlying tree metric; see Theorem~1 in \cite{buneman1971recovery}.
\begin{thm}[Buneman]\label{th:buneman}If $T$ (with leaves $L$) has no degree two nodes and $d_{uv}>0$ for every edge $uv$ of $T$. Then $T$ can be uniquely recovered from the tree metric $D=[d_{ij}]_{i,j\in L}$.	
\end{thm}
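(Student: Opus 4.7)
My plan is to prove the theorem via the \emph{four-point condition}, which is the classical tool for turning a tree metric into combinatorial information about $T$. First I would show that for any four leaves $i,j,k,l$, among the three pairwise sums
\[
S_1=d_{ij}+d_{kl},\quad S_2=d_{ik}+d_{jl},\quad S_3=d_{il}+d_{jk},
\]
two are equal and at least as large as the third. This is a direct case analysis on the subtree of $T$ spanned by the four leaves: after suppressing degree-two nodes it is either a star or a quartet consisting of two cherries joined by an internal path $P$. Using (\ref{eq:dij}), the two ``wrong'' pairings each traverse $P$ twice and hence coincide, while the ``right'' pairing $ij|kl$ avoids $P$ entirely; positivity of edge lengths $d_{uv}>0$ turns the weak inequality into strict inequality whenever $P$ is non-empty.

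With the four-point condition in hand, each quadruple $\{i,j,k,l\}\subseteq L$ canonically determines a quartet split $ij|kl$, namely the pairing achieving the strictly smaller sum (in the degenerate ``star quartet'' case all three sums coincide and no split is recorded). I would then reconstruct $T$ by induction on $|L|$. The base case $|L|\leq 3$ is trivial: with no degree-two nodes the tree is forced to be a single edge or a star, and the edge lengths follow from solving the obvious linear system in $d_{ij}$. For the inductive step I would detect a \emph{cherry}, i.e.\ a pair of leaves $(a,b)$ sharing a common neighbour, by the criterion that $ab|kl$ is the quartet split for every other pair $k,l\in L\setminus\{a,b\}$; a standard counting argument shows every tree without degree-two nodes on $\geq 4$ leaves has at least two cherries, so such $(a,b)$ exists. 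I would then collapse the cherry into a single leaf $c$ with
\[
d_{cx}\;=\;\tfrac{1}{2}\bigl(d_{ax}+d_{bx}-d_{ab}\bigr)\qquad\text{for }x\in L\setminus\{a,b\},
\]
verify via (\ref{eq:dij}) that the resulting matrix is the tree metric of the smaller tree $T'$ obtained by removing $a,b$ and suppressing the resulting degree-two node, and apply the induction hypothesis to recover $T'$ together with the pendant lengths of $a$ and $b$.

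The main obstacle is ensuring that the cherry-detection step is unambiguous and that the reduced metric really corresponds to the reduced tree. Both rely crucially on the positivity assumption $d_{uv}>0$ on every edge: without it, two leaves could be indistinguishable from a single leaf, and the four-point inequality could fail to be strict, making it impossible to distinguish competing cherry candidates. Once this point is handled, uniqueness of $T$ follows because the induction recovers $T$ from data $D$ alone, so any two trees satisfying the hypotheses that induce the same tree metric must be identical. The edge-length recovery is then automatic from (\ref{eq:dij}) by solving the overdetermined but consistent linear system indexed by leaf pairs.
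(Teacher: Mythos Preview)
The paper does not prove this theorem at all; it simply cites it as the classical result of Buneman (Theorem~1 in \cite{buneman1971recovery}). So there is no ``paper's proof'' to compare against, and your outline is already more than the paper provides. Your overall strategy---extract the four-point condition from~(\ref{eq:dij}) and rebuild $T$ by induction on $|L|$---is indeed the standard route.

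That said, the induction as you describe it is tailored to \emph{binary} trees and breaks down when $T$ has an internal vertex of degree $\geq 4$, which the hypotheses allow. Two concrete failure points: (i) your detection criterion ``$(a,b)$ is a cherry iff $ab\mid kl$ is the recorded split for every $k,l$'' need not fire at all---for the star on $d\geq 4$ leaves every quadruple is a star quartet with $S_1=S_2=S_3$, so by your own convention no split is ever recorded and no pair is flagged; (ii) even once a cherry $(a,b)$ with common neighbour $p$ is found, ``remove $a,b$ and suppress the resulting degree-two node'' only yields a valid smaller instance when $\deg(p)=3$. If $\deg(p)\geq 5$ there is no degree-two node to suppress and the new leaf $c$ would have to sit at distance $0$ from the surviving internal node $p$, violating the positivity hypothesis you need for the inductive call; if $\deg(p)=4$ you suppress $p$ and then $c$ has no place to attach.

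The fix is cheap: drop a single leaf $a$ at each step rather than a whole cherry. The restriction of $D$ to $L\setminus\{a\}$ is the tree metric of the tree obtained from $T$ by deleting $a$ and, if its neighbour becomes degree two, suppressing that node; positivity is preserved automatically. Then recover the pendant length of $a$ at the end via $d_{a,p}=\tfrac12(d_{a,b}+d_{a,b'}-d_{b,b'})$ for any two leaves $b,b'$ in different components of $T\setminus\{p\}$. Alternatively, bypass the induction and invoke directly that the collection of (possibly degenerate) quartet topologies determines the split system of $T$, hence $T$ itself; this is the form in which Buneman's result is usually packaged, e.g.\ in \cite{semple2003phylogenetics}.
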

As we mentioned above the assumptions of this theorem are automatically satisfied for $d_{ij}=-\log\tau_{ij}$ in the general Markov model as long as the assumptions (A0), (A1), and (A2) hold. We finish this section giving the explicit link between latent tree models and tree models for corrupted data.

\subsection{The noisy tree model as a latent tree model}\label{sec:noisyaslatent}

 As in Section~\ref{sec:problem}, consider a  tree $T^*$  representing a random vector $X=(X_1,\dots,X_d)$ and assume that each $X_i$ can take $r$ states (from now on $r_i=r$ for $i=1,\dots, d$). We assume that the distribution of $X$ is Markov with respect to $T^*$, so in particular we can fix a root at an inner node $\rho$ and consider transition matrices at the directed edges.

We let $T^e$ be the tree obtained from $T^*$ by adding $d$ extra vertices representing the noisy variables $X_i^e$ and by linking each $X_i$ with $X_i^e$ by an edge with the corresponding transition matrix $M^i$; see Figure~\ref{fig:tree} for an example. Then $T^e$ is a rooted tree with the root $\rho$. Note that $X_i^e$ is independent of $\{X_j,X_j^e:\;j\neq i\}$ given $X_i$ and so the vector $(X,X^e)$ is Markov with respect to the augmented tree $T^e$. Consequently, the distribution of $X^e$ lies in the latent tree model $M(T^e)$.
\begin{prop}\label{prop:ingenmark}
	If $X$ has a distribution that is Markov to $T^*$, then $X^e$ has a distribution $p$ that lies in the general Markov model $M_r(T^e)$.
\end{prop}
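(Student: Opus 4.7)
The plan is to exhibit explicitly the factorization of the joint distribution of $(X,X^e)$ over $T^e$, and then to read off the marginal of $X^e$ as the leaf-marginal of a distribution Markov to $T^e$.

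First I would root $T^*$ at the fixed inner node $\rho$, so that the Markov assumption on $X$ gives the factorization
$$
p(x)\;=\;p_\rho(x_\rho)\prod_{u\to v\in E^*} p_{v|u}(x_v\cd x_u).
$$
Next I would use the assumption recorded just before the statement, namely that for each $i$ the variable $X_i^e$ is conditionally independent of all other $X_j$ and $X_j^e$ given $X_i$. This implies
$$
p(x^e\cd x)\;=\;\prod_{i=1}^d p(x_i^e\cd x_i)\;=\;\prod_{i=1}^d M^i_{x_i,x_i^e},
$$
so that
$$
p(x,x^e)\;=\;p_\rho(x_\rho)\prod_{u\to v\in E^*}p_{v|u}(x_v\cd x_u)\prod_{i=1}^d M^i_{x_i,x_i^e}.
$$
Viewing $T^e$ as rooted at $\rho$ with all original directed edges of $T^*$ together with the new edges $i\to i^e$ equipped with the transition matrices $M^i$, the right-hand side is exactly the Markov factorization on $T^e$. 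Hence $(X,X^e)$ is Markov with respect to $T^e$, and every coordinate has the same state space of cardinality $r$.

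Finally I would identify the leaves of $T^e$: every vertex $i^e$ has degree one, while the original vertices of $T^*$ have their degree increased by one and in particular none of them has degree one any more. Thus the leaf set of $T^e$ is exactly $\{1^e,\ldots,d^e\}$, and the marginal of the joint distribution on $T^e$ over this leaf set is precisely the distribution of $X^e$. By the definition of $M_r(T^e)$ given in the previous subsection, this marginal lies in $M_r(T^e)$. I do not foresee any real obstacle here: the argument is essentially a bookkeeping exercise, and the only point worth emphasising is that the conditional independence assumption on the noise across coordinates (not merely the marginal corruption channels $M^i$) is what makes the joint distribution factor along $T^e$.
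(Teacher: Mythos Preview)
Your proposal is correct and follows exactly the route the paper takes: the paper's justification (given in the text immediately preceding the proposition rather than in a formal proof environment) is the one-line observation that the conditional independence of each $X_i^e$ from the rest given $X_i$ makes $(X,X^e)$ Markov with respect to $T^e$, whence the leaf-marginal lies in $M_r(T^e)$. You have simply spelled out the factorization and the identification of the leaf set of $T^e$ in more detail, which is fine and matches the paper's intent.
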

To recover $T^*$ from the distribution of $X^e$ we first try to recover $T^e$. For that, note that $T^e$ has a special topology with each inner vertex having one and only one leaf-child. The degree two nodes in $T^e$ correspond precisely to the leaves of $T^*$. This special topology of $T^e$ plays a crucial role in the rest of this paper.

\begin{figure}
\tikzstyle{vertex}=[circle,fill=black,minimum size=5pt,inner sep=0pt]
\tikzstyle{hidden}=[circle,draw,minimum size=5pt,inner sep=0pt]
\begin{tikzpicture}[scale=.7]
  \node[hidden] (1) at (0,0)  [label=above:${\rho=1}$] {};
    \node[hidden] (2) at (1,-1.5) [label=right:$2$]{};
    \node[hidden] (3) at (-1.5,-3) [label=left:$3$]{};
    \node[hidden] (4) at (0.5,-3) [label=left:$4$]{};
    \node[hidden] (5) at (1.5,-3) [label=right:$5$]{};
    \draw[line width=.3mm] (1) to (2);
    \draw[line width=.3mm] (1) to (3);
    \draw[line width=.3mm] (2) to (4);
        \draw[line width=.3mm] (2) to (5);
  \end{tikzpicture} 	\qquad \tikzstyle{vertex}=[circle,fill=black,minimum size=5pt,inner sep=0pt]
\tikzstyle{hidden}=[circle,draw,minimum size=5pt,inner sep=0pt]
\begin{tikzpicture}[scale=.7]
  \node[hidden] (1) at (0,0)  [label=above:${\rho=1}$] {};
    \node[hidden] (2) at (1,-1.5) [label=right:$2$]{};
    \node[hidden] (3) at (-1.5,-3) [label=left:$3$]{};
    \node[hidden] (4) at (0.5,-3) [label=left:$4$]{};
    \node[hidden] (5) at (1.5,-3) [label=right:$5$]{};
      \node[vertex] (1a) at (-1,-4.5)  [label=below:$1^e$]{};
    \node[vertex] (2a) at (1,-4.5) [label=below:$2^e$]{};
    \node[vertex] (3a) at (-2,-4.5) [label=below:$3^e$]{};
    \node[vertex] (4a) at (0,-4.5) [label=below:$4^e$]{};
    \node[vertex] (5a) at (2,-4.5) [label=below:$5^e$]{};
    \draw[line width=.3mm] (1) to (2);
    \draw[line width=.3mm] (1) to (3);
    \draw[line width=.3mm] (2) to (4);
    \draw[line width=.3mm] (2) to (5);
    \draw[line width=.3mm] (1a) to (1);
    \draw[line width=.3mm] (2a) to (2);
    \draw[line width=.3mm] (3a) to (3);
    \draw[line width=.3mm] (4a) to (4);
    \draw[line width=.3mm] (5a) to (5);
  \end{tikzpicture} 	
  \caption{A tree $T^*$ on the left and the augmented tree $T^e$ on the right. Solid nodes represent corrupted observations.}\label{fig:tree}
\end{figure}
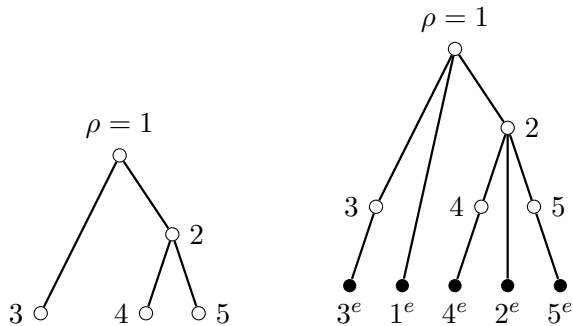

%
%
%
%

\section{Identifying $T^*$ from corrupted data}\label{sec:identify}

Now that we linked noisy tree models to latent tree models, identifiability results follow from the theory developed in mathematical phylogenetics. We exploit in addition the special form of the topology of $T^e$.

\subsection{The equivalence class of $T^*$}

In our case, the tree $\overline{T^e}$  is obtained by suppressing in  $T^e$ the nodes that correspond to the leaves of $T^*$; c.f. Definition~\ref{def:contract}. For the tree in Figure~\ref{fig:tree}, the tree $\overline{T^e}$ is given on  the left in Figure~\ref{fig:tree2}.	

Recall that in this new language, the goal is to recover $T^*$ from a distribution $p\in M_r(T^e)$.
\begin{thm}\label{th:overlineT}
If $p\in M_r(T^e)$ satisfies (A1)-(A2) then the tree $\overline{T^e}$ is uniquely identified from $p$. If, in addition, $p$ satisfies (A3) then the  underlying parameters of the model $M_r(\overline{T^e})$ are uniquely identified too.
\end{thm}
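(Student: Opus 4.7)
The plan is to reduce the theorem to an application of Chang's theorem (Theorem~\ref{th:chang}) on the suppressed tree $\overline{T^e}$, exploiting the very concrete form of the suppression: by construction of $T^e$, the only degree-two vertices are the leaves $i$ of $T^*$, and each such $i$ lies on a unique length-two path $u\to i\to i^e$ where $u$ is the parent of $i$ in $T^*$ (with $\rho$ taken as root).

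First I would invoke the unconditional part of Proposition~\ref{prop:suppress} to get $M_r(T^e)=M_r(\overline{T^e})$, so $p\in M_r(\overline{T^e})$. The root $\rho$ is an inner node of $T^*$, so it is not suppressed and remains the root of $\overline{T^e}$. By Remark~\ref{rem:contract}, the induced parametrisation of $p$ on $\overline{T^e}$ has the same root distribution $p_\rho$, keeps every non-contracted transition matrix $M^{uv}$ from $T^e$, and replaces each contracted path $u\to i\to i^e$ by a single edge $u\to i^e$ carrying the matrix $M^{ui}M^{i,i^e}$.

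Next I would verify that (A1) and (A2) still hold for $p$ as a distribution in $M_r(\overline{T^e})$. Condition (A1) is immediate since $p_\rho$ is unchanged. For (A2), invertibility is preserved under matrix multiplication, so each composed transition matrix $M^{ui}M^{i,i^e}$ remains invertible. The non-permutation part is the only delicate point, and I would handle it via determinants. A real stochastic matrix $M$ has $1$ as its Perron eigenvalue and all other eigenvalues in the closed unit disc, hence $|\det M|\le 1$; moreover $|\det M|=1$ forces every eigenvalue onto the unit circle, which for a real nonnegative matrix with row sums $1$ is classically equivalent to $M$ being a permutation matrix. Since each $M^{ui}$ and $M^{i,i^e}$ is stochastic and not a permutation (by (A2) on $T^e$), both have $|\det|<1$, and hence $|\det(M^{ui}M^{i,i^e})|<1$, so the product is not a permutation either.

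Having verified (A0)--(A2) for $p$ on $\overline{T^e}$ (where (A0) holds by construction), Theorem~\ref{th:chang} yields the unique identifiability of $\overline{T^e}$ from $p$, which is the first claim. For the second claim, Proposition~\ref{prop:suppress} says precisely that (A1)--(A3) on $T^e$ imply (A3) on $\overline{T^e}$; combined with (A0)--(A2) already checked, the parameter identifiability half of Theorem~\ref{th:chang} gives unique identification of the parameters of $M_r(\overline{T^e})$. The main obstacle in this plan is the non-permutation part of (A2) for the composed edges; the spectral/determinant dichotomy for stochastic matrices settles it cleanly, and everything else is bookkeeping inherited from Proposition~\ref{prop:suppress} and Remark~\ref{rem:contract}.
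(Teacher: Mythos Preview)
Your proposal is correct and follows essentially the same route as the paper: reduce to $\overline{T^e}$ via Proposition~\ref{prop:suppress}, check that (A0)--(A2) (resp.\ (A0)--(A3)) hold there, and apply Theorem~\ref{th:chang}. The only difference is granularity: the paper invokes Proposition~\ref{prop:suppress} as a black box for the transfer of (A1)--(A2), whereas you unpack the (A2) verification on the contracted edges via the determinant/permutation dichotomy for stochastic matrices---a fact the paper itself states and uses in Section~\ref{sec:distance}.
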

\begin{proof}
By Proposition~\ref{prop:ingenmark}, $X^e$ has distribution in $M_r(T^e)$. By Proposition~\ref{prop:suppress}, $M_r(T^e)=M_r(\overline{T^e})$. Moreover, if $p\in M_r(T^e)$ satisfies (A1)-(A2) then $p\in M_r(\overline{T^e})$ satisfies (A0)-(A2). By Theorem~\ref{th:chang}, the underlying tree $\overline{T^e}$ can be uniquely identified. The same conclusion holds for identifying the parameters of $M_r(\overline{T^e})$ if (A3) holds too.
\end{proof}

Denote by $[T^*]$ the set of all trees $S$ over the vertex set $V=\{1,\ldots,d\}$ such that $\overline{S^e}=\overline{T^e}$. {Here we mean equality as semi-labelled trees, that is, $\overline{S^e}$ and $\overline{T^e}$ must have the same topology and labelling of the leaf nodes but the labelling of the inner nodes is irrelevant.} Directly  by construction, $T^*\in [T^*]$. For another example, let $T^*$ be the tree on the  left in Figure~\ref{fig:tree}, where the corresponding tree $T^e$ is given on the right. The tree $\overline{T^e}$ is given on the left in  Figure~\ref{fig:tree2}. Now let $S$ be a tree like $T^*$ but 3 swapped with 1 and 2 swapped with 4. The corresponding tree $S^e$ is depicted on the right in Figure~\ref{fig:tree2} and $\overline{S^e}=\overline{T^e}$.

By Proposition~\ref{prop:suppress}, if $S\in [T^*]$ then $$M_r(T^e)\;=\;M_r(\overline{T^e})\;=\;M_r(S^e)$$
and so we cannot distinguish from the corrupted data  between the trees in $[T^*]$ because each $S\in [T^*]$ leads to the same model $M_r(\overline{T^e})$ for $X^e$. Theorem~\ref{th:overlineT} implies the following result.
\begin{thm}\label{th:main}
If $p\in M_r(T^e)$ satisfies (A1)-(A2), then $T^*$ can be recovered from $p$ up to the equivalence class $[T^*]$.	
\end{thm}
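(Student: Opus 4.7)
The plan is to derive Theorem~\ref{th:main} as a direct consequence of the already-established Theorem~\ref{th:overlineT}. First I would appeal to Theorem~\ref{th:overlineT}: since $p \in M_r(T^e)$ satisfies (A1)-(A2), the semi-labelled tree $\overline{T^e}$ (that is, the tree with labelled leaves $1^e, \ldots, d^e$ but unlabelled inner nodes) can be uniquely reconstructed from $p$. This is the single nontrivial input of the argument, and it is already in hand.

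Second, I would translate the recovery of $\overline{T^e}$ into a recovery of $T^*$ modulo the class $[T^*]$ by unfolding the definition. A tree $S$ on vertex set $V = \{1,\ldots,d\}$ belongs to $[T^*]$ precisely when $\overline{S^e} = \overline{T^e}$ as semi-labelled trees. Hence from $\overline{T^e}$ we can produce a candidate tree for $T^*$ by choosing any $S$ on $V$ whose augmentation $S^e$ suppresses to $\overline{T^e}$; by construction $T^*$ is one such $S$, so the procedure does return a member of $[T^*]$. Conversely, the recovery cannot be sharpened: for every $S \in [T^*]$ we have $M_r(S^e) = M_r(\overline{S^e}) = M_r(\overline{T^e}) = M_r(T^e)$ by Proposition~\ref{prop:suppress}, so the distribution $p$ is consistent with every member of $[T^*]$ and no procedure based on $p$ alone can single out $T^*$ within this class.

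There is no essential obstacle in the proof; its entire content is in Theorem~\ref{th:overlineT}, and the rest is definitional. The only point that deserves care is to be explicit that $\overline{T^e}$ is recovered as a semi-labelled tree, because it is precisely the freedom to permute the labels of inner nodes among the inner vertices of $\overline{T^e}$, and to reassign some elements of $V$ to leaf or inner positions of $T^*$, that the equivalence class $[T^*]$ encodes.
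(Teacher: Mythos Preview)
Your proposal is correct and matches the paper's own treatment almost exactly: the paper states Theorem~\ref{th:main} as an immediate consequence of Theorem~\ref{th:overlineT}, preceded by the observation (via Proposition~\ref{prop:suppress}) that every $S\in[T^*]$ yields the same model $M_r(\overline{T^e})$, so no finer recovery is possible. Your extra care in emphasizing that $\overline{T^e}$ is recovered only as a semi-labelled tree is apt and, if anything, makes the deduction cleaner than the paper's one-line pointer.
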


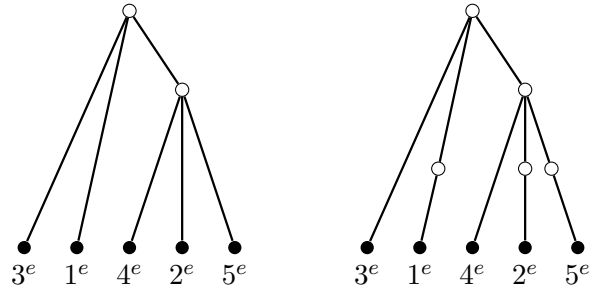
\begin{figure}
\tikzstyle{vertex}=[circle,fill=black,minimum size=5pt,inner sep=0pt]
\tikzstyle{hidden}=[circle,draw,minimum size=5pt,inner sep=0pt]
\begin{tikzpicture}[scale=.7]
  \node[hidden] (1) at (0,0)   {};
    \node[hidden] (2) at (1,-1.5) {};
      \node[vertex] (1a) at (-1,-4.5)  [label=below:$1^e$]{};
    \node[vertex] (2a) at (1,-4.5) [label=below:$2^e$]{};
    \node[vertex] (3a) at (-2,-4.5) [label=below:$3^e$]{};
    \node[vertex] (4a) at (0,-4.5) [label=below:$4^e$]{};
    \node[vertex] (5a) at (2,-4.5) [label=below:$5^e$]{};
    \draw[line width=.3mm] (1) to (2);
    \draw[line width=.3mm] (1) to (3a);
    \draw[line width=.3mm] (2) to (4a);
    \draw[line width=.3mm] (2) to (5a);
    \draw[line width=.3mm] (1a) to (1);
    \draw[line width=.3mm] (2a) to (2);
  \end{tikzpicture} 	
  \qquad \tikzstyle{vertex}=[circle,fill=black,minimum size=5pt,inner sep=0pt]
\tikzstyle{hidden}=[circle,draw,minimum size=5pt,inner sep=0pt]
\begin{tikzpicture}[scale=.7]
  \node[hidden] (1) at (0,0)   {};
    \node[hidden] (2) at (1,-1.5) {};
    \node[hidden] (3) at (-0.65,-3) {};
    \node[hidden] (4) at (1,-3) {};
    \node[hidden] (5) at (1.5,-3) {};
      \node[vertex] (1a) at (-1,-4.5)  [label=below:$1^e$]{};
    \node[vertex] (2a) at (1,-4.5) [label=below:$2^e$]{};
    \node[vertex] (3a) at (-2,-4.5) [label=below:$3^e$]{};
    \node[vertex] (4a) at (0,-4.5) [label=below:$4^e$]{};
    \node[vertex] (5a) at (2,-4.5) [label=below:$5^e$]{};
    \draw[line width=.3mm] (1) to (2);
    \draw[line width=.3mm] (1) to (3a);
    \draw[line width=.3mm] (1) to (3);
    \draw[line width=.3mm] (2) to (4);
    \draw[line width=.3mm] (2) to (5);
    \draw[line width=.3mm] (1a) to (3);
    \draw[line width=.3mm] (2a) to (4);
    \draw[line width=.3mm] (4a) to (2);
    \draw[line width=.3mm] (5a) to (5);
  \end{tikzpicture} 	

  \caption{The tree $\overline T^e$ for $T^* $ in Figure~\ref{fig:tree} and one of the trees $S^e$ for in $S\in[T^*]$.}\label{fig:tree2}
\end{figure}

The tree $\overline{T^e}$ is one natural way of representing the equivalence class $[T^*]$. To have a concrete description of this equivalence class directly in terms of $T^*$ call an inner node of $T^*$  a \emph{mother} if it is adjacent to at least one leaf of $T^*$. For example, the root and the node 2 in the tree $T^*$ in Figure~\ref{fig:tree} are mothers.
\begin{prop}\label{prop_uptolabel}
Let $A$ be the set of mothers in ${T}^*$. Then $S\in [T^*]$ if and only if $S$ is obtained from $T^*$ by label swapping of each node in $A$ and its adjacent leaves. In particular, the equivalence class $[T^*]$ is the same as the one defined in \cite{katiyar2020robust}.
\end{prop}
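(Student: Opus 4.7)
The plan is to give an explicit combinatorial description of $\overline{T^e}$ in terms of $T^*$ and then read off which trees $S$ produce the same semi-labelled tree. Since every vertex $i\in V$ gets a pendant $i^e$ in $T^e$, the degree-two nodes of $T^e$ are exactly the leaves of $T^*$: each such leaf $\ell$ has a unique internal neighbour $a$ in $T^*$ plus the pendant $\ell^e$, and suppressing $\ell$ collapses the path $a-\ell-\ell^e$ to the single edge $a-\ell^e$. Hence the internal vertex set of $\overline{T^e}$ coincides with the set of internal vertices of $T^*$; each internal node $a$ carries the block of pendant leaves $B(a):=\{a^e\}\cup\{\ell^e:\ell\in C(a)\}$, where $C(a)$ denotes the leaf-neighbours of $a$ in $T^*$; and the internal-internal edges of $\overline{T^e}$ coincide with those of $T^*$. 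Note that $|B(a)|\geq 2$ precisely for the mothers $a\in A$. Thus, as a semi-labelled tree, $\overline{T^e}$ is determined by the partition $\cP$ of $V$ into blocks $\{a\}\cup C(a)$ as $a$ ranges over the internal nodes of $T^*$, together with the abstract tree structure on these blocks induced by the internal-internal edges of $T^*$.

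For sufficiency, suppose $S$ is obtained from $T^*$ by performing, for each mother $a\in A$, at most one transposition $(a\;\ell)$ with $\ell\in C(a)$. Each such swap preserves the set $\{a\}\cup C(a)$, leaves every other block unchanged, keeps the same unlabelled internal-internal adjacency pattern, and merely changes which label inside the block plays the role of the internal node. Both pieces of data determining $\overline{T^e}$ are therefore the same when computed from $S$, so $\overline{S^e}=\overline{T^e}$ and $S\in[T^*]$.

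For necessity, assume $\overline{S^e}=\overline{T^e}$. The partition $\cP$ can be recovered from the common semi-labelled tree by declaring $i\sim j$ iff $i^e$ and $j^e$ share an internal neighbour. Comparing with the $T^*$- and $S$-descriptions of $\cP$, the blocks $\{a\}\cup C(a)$ of $T^*$ must coincide setwise with the blocks $\{b\}\cup C_S(b)$ of $S$. Singleton blocks $\{a\}$ force the non-mother internal vertices of $T^*$ to remain internal in $S$ with the same label; a non-singleton block $\{a\}\cup C(a)$ must coincide with some mother group $\{b\}\cup C_S(b)$ for a unique $b\in\{a\}\cup C(a)$. The internal-internal adjacencies of the common tree $\overline{T^e}$ then force the internal tree of $S$ to agree with that of $T^*$ except that each mother label $a$ is replaced by the corresponding $b$, which is exactly the effect of the transposition $(a\;b)$ on $T^*$ (or no swap when $b=a$). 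The composition of these swaps transforms $T^*$ into $S$, proving the claim; identification of $[T^*]$ with the equivalence class from \cite{katiyar2020robust} is immediate from this description. The main subtlety I expect is the interchangeability of leaves within a single mother block: several different relabellings inside a block can yield the same tree $S$, so one must argue that only the choice of ``which label becomes internal'' is encoded by $\overline{T^e}$, thereby reducing the freedom precisely to one transposition per mother.
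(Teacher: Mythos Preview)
Your proposal is correct and follows essentially the same approach as the paper's proof: both identify the internal vertices of $\overline{T^e}$ with the internal vertices of $T^*$, observe that the pendant leaves at each internal node $a$ are exactly $\{a^e\}\cup\{\ell^e:\ell\in C(a)\}$, and conclude that recovering $T^*$ from $\overline{T^e}$ amounts to choosing, for each internal node, which one of its pendant labels becomes the internal label. Your version is considerably more explicit than the paper's---you spell out the partition $\cP$, treat sufficiency and necessity separately, and correctly note that permutations among the leaf labels within a block do not produce new trees---whereas the paper compresses the whole argument into a few sentences about ``shrinking one leaf edge per inner node.''
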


\begin{proof}
All inner nodes of $T^e$ have exactly one adjacent leaf. Passing to $\overline{T^e}$ this changes only for the mother nodes, whose leaves represent the noisy version of the mother node and the noisy versions of all its leaves (in $T^*$). {The same occurs for any tree $S$ in $[T^*]$, as $\overline{S^e}=\overline{T^e}$. By shrinking exactly one leaf edge for each inner node of $\overline{T^e}$ (by \emph{shrinking} we mean removing the leaf edge and putting the leaf label as the label of the corresponding inner node), we recover all inner nodes of $T^*$ (resp. $S$) except for the mother nodes.}
 Thus, we can identify the node labels in $T^*$ up to label swapping of each mother in $A$ and its adjacent leaves. The last part of the statement follows from the fact that the set of mother nodes is precisely the set $\cA$ defined in Section~3 in \cite{katiyar2020robust}.
\end{proof}

As a byproduct of Theorem \ref{th:main} and this characterization of the class of $[T^*]$ we obtain:
\begin{cor}
The unlabelled version of $T^*$ can be always correctly identified {from a distribution $p\in M_r(T^e)$ satisfying $(A1)-(A2)$}.	
\end{cor}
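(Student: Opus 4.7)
The plan is to derive the corollary immediately from Theorem \ref{th:main} together with the concrete description of the equivalence class $[T^*]$ given by Proposition \ref{prop_uptolabel}. By Theorem \ref{th:main}, from any $p\in M_r(T^e)$ satisfying (A1)-(A2) we can pin down $T^*$ up to the class $[T^*]$. So it suffices to show that any two trees in $[T^*]$ are isomorphic as abstract (unlabelled) graphs.

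First I would invoke Proposition \ref{prop_uptolabel}: every $S\in [T^*]$ is obtained from $T^*$ by swapping the label of each mother $m\in A$ with the label of one of its adjacent leaves. I would then observe that each such operation is nothing other than the action on $T^*$ of a permutation $\sigma$ of the vertex set $V=\{1,\ldots,d\}$ (a product of transpositions, one for each mother). Since $S$ is obtained from $T^*$ by relabelling vertices according to $\sigma$, the map $\sigma: V(T^*) \to V(S)$ is by construction a graph isomorphism between $T^*$ and $S$. In particular, the underlying unlabelled tree topology is the same for every member of $[T^*]$.

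Finally, I would conclude: since the recovered equivalence class $[T^*]$ consists of trees that are mutually isomorphic as unlabelled trees, forgetting labels gives a single well-defined tree, namely the unlabelled topology of $T^*$. This proves the corollary.

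There is essentially no obstacle here; the only point requiring care is to justify that ``label swapping'' as used in Proposition \ref{prop_uptolabel} produces a bijection of vertices that is a graph isomorphism, which is immediate from the definition. The result is thus a direct consequence of the previously established Theorem \ref{th:main} and Proposition \ref{prop_uptolabel}.
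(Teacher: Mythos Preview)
Your proof is correct and matches the paper's approach exactly: the paper also presents this corollary as an immediate byproduct of Theorem~\ref{th:main} and Proposition~\ref{prop_uptolabel}, without giving any further argument. Your spelling out of the one remaining step---that label swapping is a vertex relabelling and hence a graph isomorphism---is the natural way to make the implication explicit.
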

%

We next discuss two somewhat extreme examples that show that $[T^*]$ can be large or small depending on $T^*$.
\begin{ex}
	If $T^*$ forms a chain $1-2-\cdots-d$ with $d\geq 4$  then $T^*$ has two mothers: $2$ and $d-1$. By Proposition~\ref{prop_uptolabel}, the equivalence class $[T^*]$ contains four trees where the  pairs of labels $1,2$ and $d-1,d$ are potentially swapped.
\end{ex}

\begin{ex}
	If $T^*$ is a star tree with 1 in the center and $d-1$ leaves $2,\ldots,d$ then $T^*$ has a single mother $1$. Since each vertex is a adjacent to $1$, it follows by Proposition~\ref{prop_uptolabel} that $[T^*]$ contains $d$ star trees with any of the $d$ vertices of $T^*$ being a potential center.
\end{ex}

%
\begin{cor}
		If $A$ is the set of mothers of $T^*$ and each mother $u$ in $A$ has $k_u$ adjacent leaves, then the number of trees in $[T^*]$ equals $\prod_{u\in A} \left(k_u + 1\right)$.
	\end{cor}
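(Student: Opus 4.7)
The plan is to turn Proposition~\ref{prop_uptolabel} directly into a counting argument. That proposition asserts that $S\in[T^*]$ if and only if $S$ is obtained from $T^*$ by permuting, for each mother $u\in A$, the labels in the set $V_u:=\{u,\ell_1^u,\ldots,\ell_{k_u}^u\}$ consisting of $u$ together with its $k_u$ adjacent leaves. So the task reduces to counting how many distinct labelled trees this procedure produces.

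First I would check that the sets $V_u$ are pairwise disjoint as $u$ ranges over $A$. Each leaf of $T^*$ has a unique neighbour, so the adjacent-leaf sets of two distinct mothers are disjoint; and no mother is itself a leaf, so a mother cannot sit in the adjacent-leaf set of another mother. Consequently the local relabellings at distinct mothers act on disjoint sets of positions and commute, so by the multiplication principle $|[T^*]|=\prod_{u\in A}N_u$, where $N_u$ is the number of distinct labelled trees obtainable by permuting the labels in $V_u$ alone.

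Second I would argue that $N_u=k_u+1$. The key observation is that a labelled tree is determined by its edge set, and inside $V_u$ only the position of $u$ is incident to vertices outside $V_u$ (namely the non-leaf neighbours of $u$ in $T^*$), while the positions of $\ell_1^u,\ldots,\ell_{k_u}^u$ are incident only to the position of $u$. Therefore the local edge set depends solely on which of the $|V_u|=k_u+1$ labels is assigned to the inner position: the remaining $k_u$ labels fill the leaf positions, but any permutation among them produces the same graph, so the $k_u+1$ choices yield $k_u+1$ genuinely distinct trees.

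The only real subtlety is this leaf-symmetry observation, and in particular making sure that permuting leaves does not secretly identify choices across different mothers; but this is precisely ruled out by the disjointness established in the first step, together with the remark that if two global choices disagree at some mother $u$ then the label at $u$'s inner position differs, yielding different edges to the unchanged external neighbours of $u$ and hence distinct edge sets globally. Everything else is just the multiplication principle, and multiplying the $k_u+1$ local counts over $u\in A$ delivers $|[T^*]|=\prod_{u\in A}(k_u+1)$.
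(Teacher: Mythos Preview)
Your argument is correct and follows the same approach as the paper, which also deduces the count directly from Proposition~\ref{prop_uptolabel} by observing that each mother contributes a factor of $k_u+1$. Your version is simply more careful: the paper's proof is three sentences and does not explicitly verify the disjointness of the $V_u$ or the leaf-symmetry that collapses the $(k_u+1)!$ permutations to $k_u+1$ distinct trees, whereas you spell both points out.
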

	\begin{proof} By Proposition \ref{prop_uptolabel}, the trees in $[T^*]$ are obtained by label swapping of the nodes $u\in A$ with their adjacent leaves. For each node $u\in A$, we have $k_u+1$ labels to swap (counting the label of $u$ and its adjacent leaves). Thus, there are $\prod_{u\in A} \left(k_u + 1\right)$ trees in $[T^*]$.
\end{proof}

This shows that the equivalence class $[T^*]$ can be still potentially quite large. We now discuss extra assumptions that allow us to identify $T^*$ uniquely.

\subsection{Identifying $T^*$ exactly}\label{sec:exact}

It is possible to completely identify the tree $T^*$ under a mild assumption on the noise of the mother nodes. We formulate this condition in terms of the distances $d_{ij}=-\log\tau_{ij}^2$ defined in Section~\ref{sec:distance}.

\begin{thm}\label{th:identifyexactly} Let $X$ be a distribution that is Markov with respect to $T^*$ and let $X^e$ be a corrupted version of $X$ with a distribution $p\in M_r(T^e)$.
Assume that $p$ satisfies (A1), (A2) and

(A4) for each mother $u$ in $T^*$, $d_{u,i^e}>d_{u,u^e}$ for each leaf $i$ adjacent to $u$.

\noindent Then, the tree $T^*$ is uniquely identifiable.
{If, in addition, $p$ satisfies $(A3)$, then the underlying parameters corresponding to internal edges of $T^*$ can also be uniquely identified.}
\end{thm}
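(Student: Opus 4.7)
The plan is to build on Theorem~\ref{th:overlineT}: under (A1)--(A2), the distribution $p\in M_r(T^e)$ already identifies $\overline{T^e}$ as a semi-labelled tree (leaves labelled $1^e,\ldots,d^e$; inner-node labels a priori ambiguous), and by Proposition~\ref{prop_uptolabel} the residual ambiguity in reconstructing $T^*$ is exactly a label swap at each mother node $u$ together with its adjacent leaves. What remains is to use (A4) to decide, at each such $u$, which pendant leaf at $u$ in $\overline{T^e}$ is the noise copy $u^e$ (as opposed to the noisy version of a leaf of $T^*$ adjacent to $u$).

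To make (A4) operational we first recover all edge lengths of $\overline{T^e}$ from $p$. The quantities $\tau_{i^e j^e}$, and hence $d_{i^e j^e}=-\log\tau_{i^e j^e}^2$, are directly computable from $p$ via (\ref{eq:u}); under (A1)--(A2) all edge lengths in $\overline{T^e}$ are strictly positive and $\overline{T^e}$ has no degree-two node, so by Theorem~\ref{th:buneman} (applied with edge weights) the leaf distances uniquely determine all edge lengths. Now fix an inner node $v$ of $\overline{T^e}$, which corresponds to an inner node of $T^*$. If $v$ has a unique pendant edge, then $v$ is not a mother in $T^*$ and that edge must be $v$--$v^e$, so the label of $v$ is forced. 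If $v$ has several pendant edges, then $v$ is some mother $u$, and its pendant edges consist of the edge $u$--$u^e$ of length $d_{u,u^e}$ together with one edge $u$--$i^e$ of length $d_{u,i}+d_{i,i^e}=d_{u,i^e}$ for each leaf $i$ of $T^*$ adjacent to $u$ (this length arises from the suppressed degree-two node $i$ in $T^e$, as described in Remark~\ref{rem:contract}). Assumption (A4) says precisely that $d_{u,u^e}<d_{u,i^e}$ for each such $i$, so the shortest pendant edge at $v$ singles out $u^e$. Once this is done at every mother, every vertex of $\overline{T^e}$ is labelled, and $T^*$ is reconstructed by deleting each pendant leaf identified as a noise copy of an inner node and relabelling every remaining pendant leaf $i^e$ as $i$.

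For the parameter statement, assume additionally (A3). By Theorem~\ref{th:overlineT} the full parameterization of $M_r(\overline{T^e})$ is then identifiable, i.e.\ the root distribution and the transition matrix on every edge of $\overline{T^e}$, with the root chosen as any inner node $\rho$ of $T^*$ (which is also inner in $\overline{T^e}$). The internal edges of $T^*$ are exactly the edges of $\overline{T^e}$ between two inner nodes; no degree-two node of $T^e$ has been suppressed along such an edge, so by Remark~\ref{rem:contract} the transition matrices in $T^*$ and in $\overline{T^e}$ coincide, yielding the claimed identification. The only genuinely new step beyond quoting earlier results is the pendant-edge comparison at a mother, and this is exactly where (A4) enters; the argument would break if (A4) were relaxed to a non-strict inequality or dropped, which I expect to be the sole nontrivial point to flag in writing out the full proof.
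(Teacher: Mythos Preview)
Your proposal is correct and follows essentially the same approach as the paper: recover $\overline{T^e}$ via Theorem~\ref{th:overlineT}, read off the pendant edge lengths, and at each mother $u$ use (A4) to declare the shortest pendant edge to be $u\!-\!u^e$. Your treatment of the parameter statement (invoking Theorem~\ref{th:overlineT} together with Remark~\ref{rem:contract} to see that internal edges of $T^*$ carry the same transition matrices in $\overline{T^e}$) is in fact more explicit than the paper's one-line reference.
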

\begin{proof}
By Theorem~\ref{th:overlineT} we can recover the underlying tree $\overline{T^e}$ from the distances implied by $p\in M_r(\overline{T^e})$. Then it is also straightforward to identify the underlying edge lengths $d_{uv}$ for the edges $uv$ of $\overline{T^e}$. To recover $T^*$ we need to shrink exactly one terminal edge for each inner node of $\overline{T^e}$; c.f. the proof of Proposition~\ref{prop_uptolabel}. The only ambiguity in recovering $T^*$ from $\overline{T^e}$ comes from the nodes that have more than one leaf (corresponding to the mother nodes in $T^*$). If $u$ is a given mother node then the lengths of the corresponding terminal edges are $d_{u,u^e}$ and $d_{u,i^e}$ for all leaves $i$ adjacent to $u$ (in $T^*$). With our constraints, the node $u^e$ is the one of minimum distance to $u$. Thus, the tree $T^*$ can be completely identified. {The last statement follows directly from Theorem \ref{th:main}.}
\end{proof}

In terms of the parameters of the distribution, condition (A4) translates to
\begin{eqnarray*}
\det M^{u \, u^e}\det M^{u^e \, u} &>& \det M^{i^e u}\det M^{u \, i^e}\\
&= &(\det M^{i u}\det M^{u \, i})(\det M^{i^e i}\det M^{i \, i^e}),	
\end{eqnarray*}
where $M^{uv}$ denotes the conditional distribution of the variable represented by the node $v$ given the variable represented by the node $u$. In particular, for the Ising model on $T^*$, this is equivalent to saying that
$$
|{\rm corr}(X_u,X_u^e)|\geq |{\rm corr}(X_u,X_i){\rm corr}(X_i,X_i^e)|
$$
for each mother $u$ and each of its adjacent leaves $i$. In other words, the correlation between $X_u$ and its noisy version is greater than the correlation between $X_u$ and the noisy version of any other variable; hardly a controversial assumption to make.

\section{Learning $T^*$ and its parameters}\label{sec:treelearning}

In this section we briefly review some of the methods that can be used to learn the tree  $\overline{T^e}$ that represents the equivalence class $[T^*]$ from data. We also show how this can be extended to learn the parameters of $M_r(\overline{T^e})$ and how it affects the problem of learning $T^*$ and the corresponding parameters. We demonstrate the performance of some of these methods on simulated data.

\subsection{Consistency of the Chow-Liu algorithm}

Recall that the Chow-Liu algorithm \cite{chow1968approximating} relies on computing mutual informations and building the maximum cost spanning tree of the resulting weighted graph. Following the debate in \cite{katiyar2020robust} on whether the Chow-Liu method is a good method to recover the tree structure for noisy data, we study the conditions under which noisy data still allow for consistent estimation of the correct tree in some specific cases. We focus on the situation when for every $i,j$ the mutual information $I(X_i, X_j)$ between $X_i$ and $X_j$ is a strictly decreasing function of the distance $d_{ij}=-\log\tau_{ij}^2$. This includes the binary Ising model with no external field and more generally the \textit{fully symmetric tree models on $r$ states}; see Lemma~6 in \cite{choi2011learning}. The fully symmetric model on $r$ states is a tree model such that each variable has $r$ states and uniform marginal distribution. Moreover, each transition matrix is of the form
$$
M^{uv}_{ij}\;=\;\begin{cases}
	1-(r-1)\theta_{ij} & \mbox{if }i=j,\\
	\theta_{ij} & \mbox{otherwise}.
\end{cases}
$$

In the special case when the mutual informations $I(X_i,X_j)$ are strictly decreasing function the distances $d_{ij}=-\log\tau^2_{ij}$, we can equivalently build the minimum cost spanning tree based on the distances $D=[d_{ij}]$. For consistency argument, we can replace the sample correlations $\hat \rho_{ij}$ with the actual correlations of the data generating distribution. In this case $D=[d_{ij}]$ forms a tree metric on $T^*$ and $T^*$ is the (unique) minimum cost spanning tree of the complete graph weighted with $D$; we write $T^*={\rm MWST}(D)$. This implies that the Chow-Liu algorithm is a consistent tree recovery method.

To consider consistency of the Chow-Liu algorithm for noisy data note that now the corresponding distances are $\bar d_{ij}=-\log \bar \tau_{ij}^2$ where $\bar \tau_{ij}$ are defined for $(X_i^e,X_j^e)$. We have $$\bar d_{ij}=d_{ij}+d_{i i^e}+d_{j j^e}$$
and so $\overline D=[\bar d_{ij}]$ does not form a tree metric on $T^*$. In the next theorem we provide conditions on the noise distribution that assure that the Chow-Liu method remains a consistent method for the unique recovery of the true underlying tree $T^*$. In other words, we study the conditions under which $T^*={\rm MWST}(\overline D)$. We will use the notation $\ell_i:=d_{i i^e}=-\log(\tau_{i i^e}^2)$.
\begin{prop}\label{th:CL}
Suppose that for every $i,j$ the mutual information $I(X_i,X_j)$ is a decreasing function of $d_{ij}=-\log\tau_{ij}^2$. Then for the Chow-Liu method to be a consistent tree recovery method with noisy data it must hold that  $d_{uv}\geq  \ell_u-\ell_v$ for all edges $uv$ of $T^*$ such that $u$ is not a leaf of $T^*$. On the other hand, if all these inequalities are strict, this condition is also sufficient for unique recovery of $T^*$.
\end{prop}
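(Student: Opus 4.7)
The plan is to reduce the question to a combinatorial statement about minimum weight spanning trees. Since $I(X_i^e,X_j^e)$ is strictly decreasing in $\bar d_{ij}$, the Chow--Liu algorithm applied to the noisy data returns the minimum weight spanning tree of the complete graph weighted by $\overline D$. Consistency therefore amounts to $T^*\in {\rm MWST}(\overline D)$, and \emph{unique} recovery (under the strict form of the inequalities) to $T^*$ being the unique MWST. I would use the standard cycle exchange criterion: $T^*$ is (the unique) ${\rm MWST}(\overline D)$ iff for every non-tree edge $xy$ and every tree edge $uv$ on the $T^*$-path from $x$ to $y$, $\bar d_{uv}\le \bar d_{xy}$ (respectively, with strict inequality when $xy\ne uv$). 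The key identity, when $u$ lies on the $x$-side and $v$ on the $y$-side of the edge $uv$, is
\[
\bar d_{xy}-\bar d_{uv}\;=\;\bigl(d_{xu}+\ell_x-\ell_u\bigr)+\bigl(d_{vy}+\ell_y-\ell_v\bigr),
\]
which follows by expanding $\bar d_{ij}=d_{ij}+\ell_i+\ell_j$ and using the path decomposition $d_{xy}=d_{xu}+d_{uv}+d_{vy}$.

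For the necessary direction I fix an edge $uv$ of $T^*$ with $u$ not a leaf and choose a neighbour $w\ne v$ of $u$, which exists by hypothesis. Applying the identity to $(x,y)=(w,v)$ makes the second bracket vanish, so the MWST inequality $\bar d_{uv}\le \bar d_{wv}$ collapses to $d_{uw}\ge \ell_u-\ell_w$. Letting $uv$ (equivalently, the pair $(u,w)$) range over all admissible edges recovers the full list of inequalities in the statement.

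For the sufficient direction assume the strict inequalities and fix a non-tree edge $xy$ together with a tree edge $uv$ on its $T^*$-path. Walk from $u$ back to $x$ along $u=u_0-u_1-\cdots-u_m=x$; every $u_i$ with $i<m$ is non-leaf, either by hypothesis ($u_0=u$) or because $u_i$ is interior to the path and hence has degree at least two in $T^*$. Applying the strict assumption to each edge $u_iu_{i+1}$ in the orientation with the non-leaf endpoint $u_i$ first and telescoping gives $d_{xu}>\ell_u-\ell_x$ when $x\ne u$ (and trivially $d_{xu}=\ell_u-\ell_x$ when $x=u$). The symmetric argument on the $v$-side yields $d_{vy}\ge \ell_v-\ell_y$ with equality only if $y=v$. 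Since $xy\ne uv$, at least one bracket in the identity is strictly positive and the other non-negative, so $\bar d_{xy}>\bar d_{uv}$ and $T^*$ is the unique MWST.

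The main obstacle I anticipate is the bookkeeping in the sufficiency step: one must verify that the non-leaf hypothesis indeed applies to the correct endpoint of every edge along the $T^*$-path, and handle cleanly the degenerate cases $x=u$ or $y=v$ where one of the two brackets vanishes. Once the telescoping is oriented correctly both directions fall out.
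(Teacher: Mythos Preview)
Your argument is correct and follows essentially the same route as the paper: both directions rest on the cycle property of minimum weight spanning trees, and sufficiency is obtained by telescoping the assumed strict edge inequalities along the $T^*$-path. Your explicit decomposition $\bar d_{xy}-\bar d_{uv}=(d_{xu}+\ell_x-\ell_u)+(d_{vy}+\ell_y-\ell_v)$ packages the computation a little more cleanly than the paper (which, for sufficiency, picks a vertex $k$ on the $i$--$j$ path and shows $\bar d_{ij}>\max\{\bar d_{ik},\bar d_{jk}\}$), but the underlying idea is identical; the only cosmetic wrinkle is that in your necessity step the displayed comparison yields the inequality for the edge $uw$ rather than $uv$, which is why you need the closing remark about ranging over admissible edges.
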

\begin{proof}
	We first show that the condition in the theorem is necessary. Let $uv$ in $T^*$ be an edge such that $u$ is not a leaf. In this case there exists a node $w$ such that $w\neq v$ and $uw$ is an edge of $T^*$. Consider the cycle $(w,u,v)$. By the cycle property of the minimum weight spanning tree the condition $T^*={\rm MWST}(\overline D)$ implies that 	$\bar d_{uv}\leq \bar d_{vw}$ and $\bar d_{uw}\leq \bar d_{vw}$, which translates to $d_{uw}\geq  \ell_u-\ell_w$ and $d_{uv}\geq  \ell_u-\ell_v$, proving necessity.

Suppose now that the condition of the theorem holds with strict inequalities, that is,  $d_{uv}>  \ell_u-\ell_v$ for all edges $uv$ of $T^*$ such that $u$ is not a leaf of $T^*$. Let $i,j$ be any two non-adjacent vertices of $T^*$ and let
$k$ be any other node on the path between them. We have $\bar d_{ik}=d_{ik}+\ell_i+\ell_k$, $\bar d_{jk}=d_{jk}+\ell_j+\ell_k$, and
$$
\bar d_{ij}=d_{ik}+d_{kj}+\ell_i+\ell_j.
$$
Similarly, as above we show that $\bar d_{ij}> \max\{\bar d_{ik},\bar d_{jk}\}$ as long as $d_{jk}> \ell_k-\ell_j$ and $d_{ik}> \ell_k-\ell_j$. This would then imply that $ij$ cannot be an edge in ${\rm MWST}(\overline D)$.  We show that $d_{jk}> \ell_k-\ell_j$ and the proof of the second inequality is similar. If $jk$ is an edge of $T^*$ then $d_{jk}> \ell_k-\ell_j$ because $k$ is not a leaf. If $jk$ is not an edge then there exists a path $k-i_1-\cdots-i_m-j$. Since $k,i_1,\ldots,i_m$ are non-leaves we conclude
$$
d_{jk}\;=\;d_{k{i_1}}+\cdots+d_{i_m j}\;>\; (\ell_{k}-\ell_{i_1})+\cdots+(\ell_{i_m}-\ell_{j})\;=\;\ell_k-\ell_j
$$
		\end{proof}

\begin{cor}
For any symmetric discrete tree model, if $\ell_i=\ell\geq 0$ for all $1\leq i\leq d$ then Chow-Liu gives a consistent way of uniquely recovering $T^*$ from the noisy data.
\end{cor}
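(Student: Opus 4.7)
The plan is a direct application of Proposition~\ref{th:CL}.

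First I would verify the monotonicity hypothesis of that proposition for the noisy variables. For a fully symmetric discrete tree model on $T^*$, the natural corruption model (with corruption probabilities $q_i$ distributing the remaining mass uniformly over the other states) has noise matrices $M^i$ of exactly the symmetric form, with $\theta_{i\,i^e}=q_i/(r-1)$. Consequently the augmented distribution on $T^e$ is itself a fully symmetric tree model, and Lemma~6 in \cite{choi2011learning} applies to it: the mutual information $I(X_i^e,X_j^e)$ is a strictly decreasing function of $\bar d_{ij}=-\log\bar\tau_{ij}^2$. This is precisely the hypothesis required by Proposition~\ref{th:CL}.

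Second, I would check the sufficient condition of Proposition~\ref{th:CL}, namely $d_{uv}>\ell_u-\ell_v$ for every edge $uv$ of $T^*$ with $u$ not a leaf. Under the assumption $\ell_i=\ell$ for all $1\leq i\leq d$, the right-hand side is identically zero, so the condition reduces to $d_{uv}>0$ on every internal-incident edge of $T^*$. This is automatic from assumption (A2), implicitly in force throughout this section: invertibility of $M^{uv}$ together with the fact that it is not a permutation matrix forces $\tau_{uv}^2\in(0,1)$, hence $d_{uv}=-\log\tau_{uv}^2>0$ for every edge.

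Combining the two observations, Proposition~\ref{th:CL} yields $T^*={\rm MWST}(\overline D)$, which is exactly the consistency statement for Chow-Liu on the noisy data. There is essentially no obstacle here---this genuinely is a corollary---and the only point that deserves a line of explanation is that symmetric corruption keeps the augmented model inside the class where Lemma~6 of \cite{choi2011learning} applies, so that the monotonicity of mutual information transfers from the clean to the noisy variables.
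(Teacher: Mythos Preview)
Your proposal is correct and follows the paper's intended route: the corollary is stated without proof and is meant to drop out of Proposition~\ref{th:CL} immediately, since equal noise levels give $\ell_u-\ell_v=0<d_{uv}$ for every edge of $T^*$. Your additional step—checking that with symmetric corruption the augmented model on $T^e$ remains fully symmetric, so that Lemma~6 of \cite{choi2011learning} still applies and $I(X_i^e,X_j^e)$ is monotone in $\bar d_{ij}$—is a genuine clarification that the paper leaves implicit but is needed to conclude that Chow--Liu on the noisy data coincides with ${\rm MWST}(\overline D)$.
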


Note that the condition $d_{uv}>\ell_u-\ell_v$ is equivalent to $d_{u v^e}>d_{u u^e}$, which is a natural assumption in many applications. In case this condition does not hold, Theorem~\ref{th:identifyexactly} assures that $T^*$ can be still uniquely identified as long as the condition $d_{uv}>\ell_u-\ell_v$ holds for all cases when $u$ is a mother node and $v$ is one of its leaves. It is just that this identifiability cannot be in general obtained using the Chow-Liu algorithm.

There is a handful of algorithms that can be used to learn $\overline{T^e}$ (or equivalently $[T^*]$); see, for example, \cite{choi2011learning,mossel2013robust}. In case unique recovery conditions in Theorem~\ref{th:identifyexactly} hold, we can recover $T^*$ from $\overline{T^e}$ by shrinking for each inner node the shortest of its terminal edges.

\subsection{Learning $T*$ from distances by Neighbor-Joining}\label{sec:simulDiscrete}

One of the most widely used methods to recover a phylogenetic tree from evolutionary pairwise distances is Neighbor-Joining (briefly NJ) \cite{saitou1987}. In order to test the performance of this method in the recovery of $\overline{T^e}$  or $T^*$ from noisy data, we have simulated corrupted data on each of the trees $T^*$ of Figure \ref{fig:trees_simul}. We have restricted ourselves to the fully symmetric model on $r$ states for $r=2$ and $r=4$ (also known as the Jukes-Cantor model, \cite{JC69}) both for the stochastic matrices $M^{uv}$ and $M^i$. For each edge $u\rightarrow v$ of $T^*$, we have set the off-diagonal entries of $M^{uv}$  equal to $0.20$ for $r=2$ and to $0.07$ for $r=4$ so that the distance $d_{u,v}$ equals 1 for $r=2$ and $2$ for $r=4$. In order to see how the performance of the method varies when the noise increases, we have set all distances $d_{i,i^e}$ equal to $\ell$ and let $\ell$ vary from $0.01$ up to $3$ for $r=2$ and up to $4$ for $r=4$ in intervals of  $0.1$. Note that in practice we do not expect $\ell$ to be larger than $1$ ($X_i$ should be correlated more with its noisy version than with other variables). Finally, for each set of parameters we considered $1000$ experiments, each with sample size $5000$ from the corresponding tree distribution.


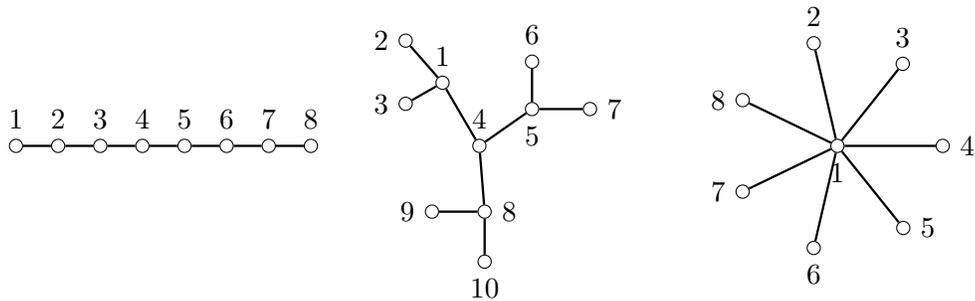
\begin{figure}
	\tikzstyle{vertex}=[circle,fill=black,minimum size=5pt,inner sep=0pt]
	\tikzstyle{hidden}=[circle,draw,minimum size=5pt,inner sep=0pt]
	\begin{tikzpicture}[scale=.7]
	\node[hidden] (9) at (-5,0)  [label=above:$8$] {};
	\node[hidden] (10) at (-5.8,0) [label=above:$7$]{};
	\node[hidden] (11) at (-6.6,0) [label=above:$6$]{};
	\node[hidden] (12) at (-7.4,0) [label=above:$5$]{};
	\node[hidden] (13) at (-8.2,0) [label=above:$4$]{};
	\node[hidden] (14) at (-9,0) [label=above:$3$]{};
	\node[hidden] (15) at (-9.8,0) [label=above:$2$]{};
	\node[hidden] (16) at (-10.6,0) [label=above:$1$]{};
	\draw[line width=.3mm] (9) to (10);
	\draw[line width=.3mm] (10) to (11);
	\draw[line width=.3mm] (11) to (12);
	\draw[line width=.3mm] (12) to (13);
	\draw[line width=.3mm] (13) to (14);
	\draw[line width=.3mm] (14) to (15);
	\draw[line width=.3mm] (15) to (16);

	\node[hidden] (1) at (-2.5, 1.2)  [label=above:$1$] {};
	\node[hidden] (2) at (-3.2, 2) [label=left:$2$]{};
	\node[hidden] (3) at (-3.2, 0.8) [label=left:$3$]{};
	\node[hidden] (4) at (-1.8, 0) [label=above:$4$]{};
	\node[hidden] (5) at (-0.8, 0.7) [label=below:$5$]{};
	\node[hidden] (6) at (-0.8, 1.6) [label=above:$6$]{};
	\node[hidden] (7) at (0.3, 0.7) [label=right:$7$]{};
	\node[hidden] (8) at (-1.7, -1.25) [label=right:$8$]{};
	\node[hidden] (9) at (-2.7, -1.25) [label=left:$9$]{};
	\node[hidden] (10) at (-1.7, -2.2) [label=below:$10$]{};
	\draw[line width=.3mm] (1) to (2);
	\draw[line width=.3mm] (1) to (3);
	\draw[line width=.3mm] (1) to (4);
	\draw[line width=.3mm] (4) to (5);
	\draw[line width=.3mm] (4) to (8);
	\draw[line width=.3mm] (5) to (6);
	\draw[line width=.3mm] (5) to (7);
	\draw[line width=.3mm] (8) to (9);
	\draw[line width=.3mm] (8) to (10);

	\node[hidden] (1) at (5,0)  [label=below:$1$] {};
	\node[hidden] (2) at (4.55, 1.95) [label=above:$2$]{};
	\node[hidden] (3) at (6.24, 1.56) [label=above:$3$]{};
	\node[hidden] (4) at (7, 0) [label=right:$4$]{};
	\node[hidden] (5) at (6.25, -1.56) [label=right:$5$]{};
	\node[hidden] (6) at (4.55, -1.94) [label=below:$6$]{};
	\node[hidden] (7) at (3.2, -0.87) [label=left:$7$]{};
	\node[hidden] (8) at (3.2, 0.87) [label=left:$8$]{};
	\draw[line width=.3mm] (1) to (2);
	\draw[line width=.3mm] (1) to (3);
	\draw[line width=.3mm] (1) to (4);
	\draw[line width=.3mm] (1) to (5);
	\draw[line width=.3mm] (1) to (6);
	\draw[line width=.3mm] (1) to (7);
	\draw[line width=.3mm] (1) to (8);

	\end{tikzpicture}
	
	\caption{A chain tree on the left, a binary tree with ten nodes on the middle and a star tree on the right.}\label{fig:trees_simul}
\end{figure}

Using the normalized Robinson-Foulds distance (as implemented in \cite{phangorn}), we measure how far is the recovered tree from the original tree class $[T^*]$ (that is, from $\overline{T^e}$). Note that NJ always outputs binary trees while the trees $\overline{T^e}$ we are considering are not binary trees. Thus, we set a tolerance $\varepsilon$ such that if the estimated length of an internal edge is smaller than $\varepsilon$, we shrink that edge. In Figure \ref{fig:results_r24} we show the results for different values of $\varepsilon$. We observe that the best results are obtained when $\varepsilon$ is about half of the length of the branches of $T^*$. As expected, as the noise $\ell$ (the length of the corrupted branches) increases, it becomes more difficult to recover $\overline{T^e}$. {For example,when $\ell$ is smaller than the length of the original edges of the tree (i.e. $\ell\leq 1$ for $r=2$ or $\ell\leq 2$ for $r=4$), then we obtain highly successful results for almost all values of $\varepsilon$.}
It is worth noting that for $\ell=2$ we have transition matrices with condition number $2.72$ for $r=2$ and $1.4$ for $r=4$; similarly, for $\ell=3$ transition matrices have condition number $4.48$ for $r=2$ and $1.65$ for $r=4$. This has to be taken into account in relation to the hypothesis $(A2)$ about the invertibility of transition matrices (see Theorem \ref{th:overlineT}) and gives an insight to the different performance we obtain for $r=2$ and $r=4$.



\begin{figure}
	\centering
	\makebox[\textwidth][c]{\includegraphics[width=1.2\textwidth]{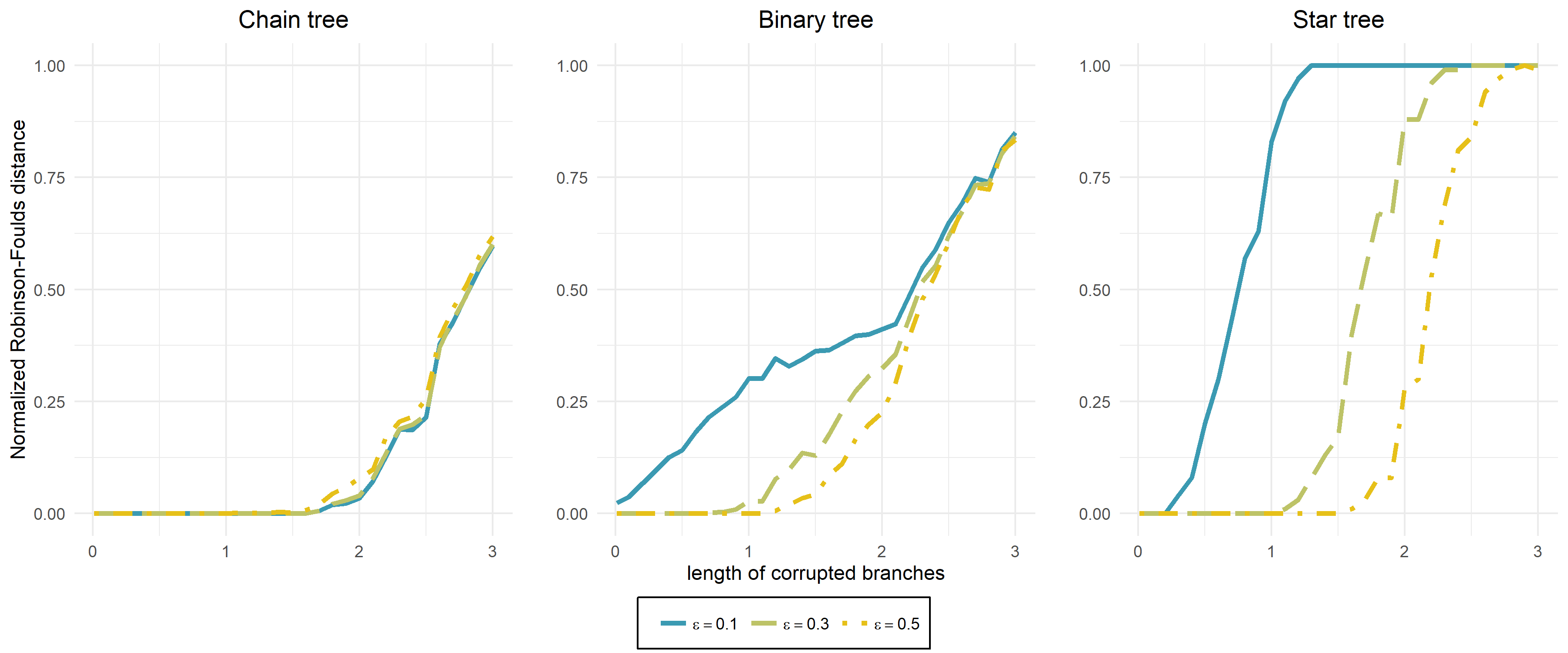}}
	\makebox[\textwidth][c]{\includegraphics[width=1.2\textwidth]{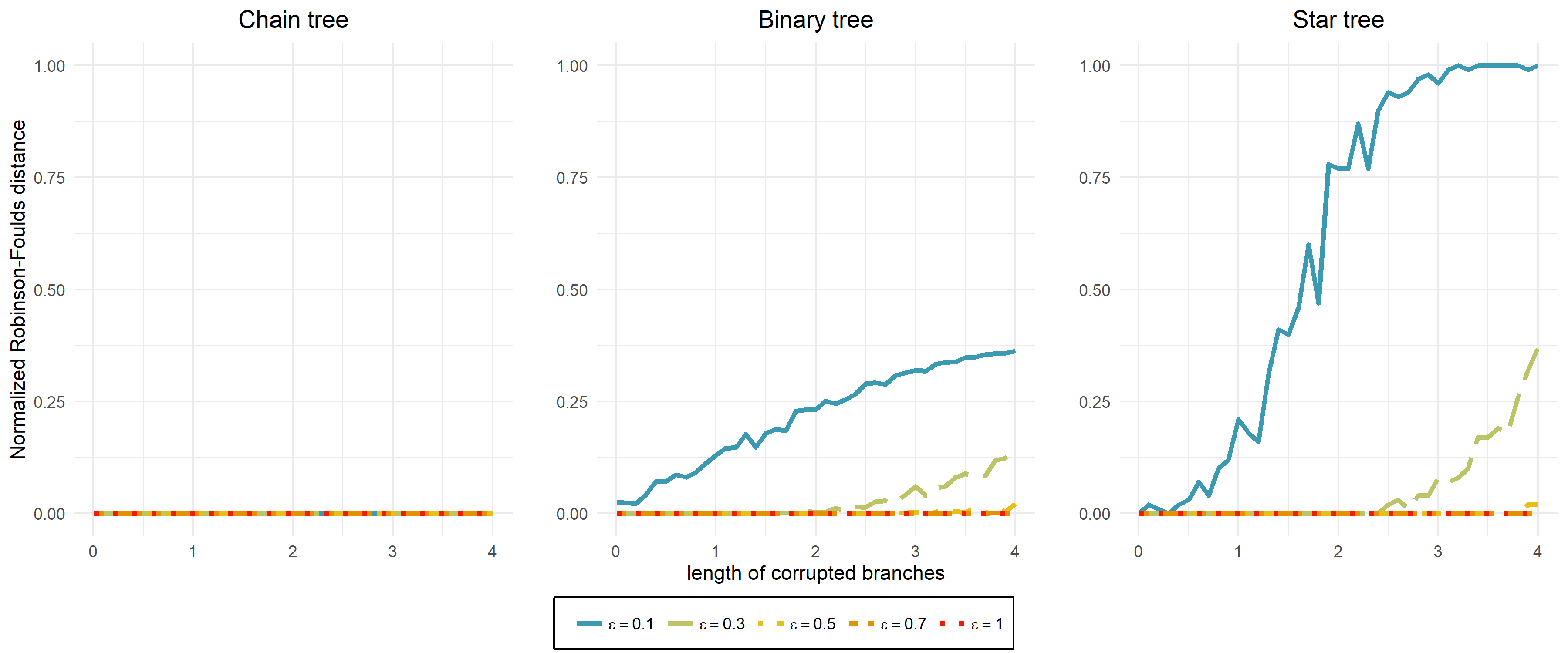}}%
	
	\caption{\label{fig:results_r24}Performance of NJ on corrupted data on $r=2$ states (top) and $r=4$ states (bottom) simulated on the trees of Fig \ref{fig:trees_simul}. The lengths of the edges of $T^*$ have been set to $1$ for $r=2$ and 2 for $r=4$ and the length $\ell$ of the corrupted branches varies in the x-axis. The figures show the normalized Robinson-Foulds distance between the recovered tree and $\overline{T^e}$ for different thresholds $\varepsilon$ for which the internal branches are shrunk. On the bottom left figure, the distance is zero for all values of $\varepsilon$ and all lengths of corrupted branches.}
\end{figure}

In practice, the choice of the threshold $\varepsilon$ from data could be done in a data-driven manner. However, any such procedure would be better implemented if there were some previous knowledge on the noise level or on underlying tree. Indeed, the case of a chain tree is dramatically different than a star tree with  the former being the one that gives best results and the star tree the worst.

For the binary tree, we have also implemented a slightly different procedure for tree recovery that uses the prior knowledge that the underlying tree is binary: we shrink the shortest internal edges of the tree output by NJ until the internal structure (i.e. removing external edges) gives a binary tree. We present these results in Figure \ref{fig:results_binary}. In this case, we obtain excellent results for $r=4$ (in both cases, for $r=2$ and $r=4$, the results are similar to the ones obtained with $\varepsilon=0.5$ and $1$, respectively).

\begin{figure}
\makebox[\textwidth][c]{\includegraphics[width=1.1\textwidth]{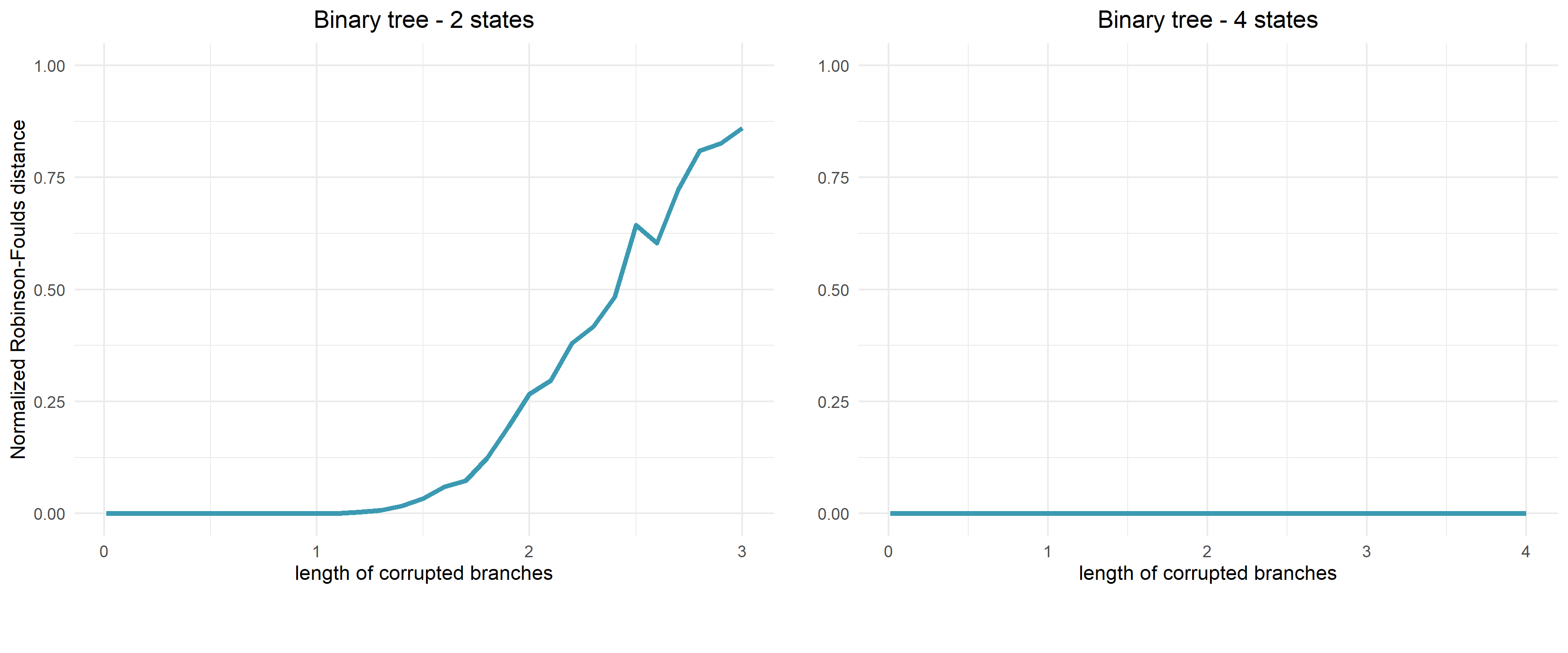}}
\caption{\label{fig:results_binary}Performance of NJ on corrupted data on $r=2$ states (left) and $r=4$ states (right) simulated on the binary tree of Fig \ref{fig:trees_simul}. The lengths of the edges of $T^*$ have been set to $1$ for $r=2$ and 2 for $r=2$ and the length $l$ of the corrupted branches varies in the x-axis. The figures show the normalized Robinson-Foulds distance between the recovered tree with its shortest edges shrunk until the internal structure is a binary tree and $\overline{T^e}$. }
\end{figure}

We also implemented the recovery of $T^*$ based on Theorem \ref{th:identifyexactly} for the same simulated data. If $\overline{T^e}$ is correctly obtained, then $T^*$ is successfully reconstructed most of the times (so we do not include figures with this information) for all the trees. Actually, for the binary and the star tree, $T^*$ is correctly recovered 100\% of the times (both for $r=2$ and $r=4$, and for any $\varepsilon$). For the chain tree the performance drops slightly for $r=2$ and $\ell>0.5$.  In this case 94\% of the times when $T^*$ is not correctly identified, the recovered tree differs from $T^*$ by one leaf.

%

\section{Generalizations to linear models}\label{sec:linear}

We now briefly  mention a generalization to linear models on trees; see Section 2.3 in \cite{zwiernik2018latent} for more details. This generalizes the discrete case discussed in the previous sections, the Gaussian case, and some other cases of interest. In particular, it allows us to discuss continuous corruption models for discrete data as the one in Section~\ref{sec:conntinuous}.

\subsection{Linear models on trees}

In this section the vector $X$ takes values in any product space $\cX=\prod_{i=1}^d \cX_i$, where $\cX_i$ do not have to be discrete. We consider that $X$ follows a linear model on the tree $T^*$. This means that $X$ is Markov with respect to $T^*$ and for every edge $u\to v$ on the rooted version of $T^*$ the conditional expectation $\E(X_v|X_u)$ is an affine function of $X_u$. Models of this type were first discussed in~\cite{anandkumar2011spectral}.  A canonical example of such a situation is the Gaussian model on $T^*$. However, it also includes the discrete case discussed earlier. We explain this last connection in the following example.
\begin{ex}
If $X=(X_1,\ldots,X_d)$ is a discrete random vector with each component having $r$ values $\{0,1,\ldots,r-1\}$ we encode these states with $$\{0,e_1,\ldots,e_{r-1}\}\subset \R^{r-1},$$ where $e_i$ is the $i$-th canonical unit vector in $\R^{r-1}$ and $0\in \R^{r-1}$ is the zero vector. A ternary variable, for example, will take values $(0,0)$, $(1,0)$, $(0,1)$ in $\R^2$ instead of the typical $0,1,2$ in $\R$. The conditional expectation $$T(x_w)\;:=\;\E[X_u|X_w=x_w]$$ is an affine function of $x_w$. The value of $T(e_i)$ is simply the vector in $\R^{r-1}$ whose entries are the conditional probabilities of $X_u$ being $1,2,\ldots,r-1$ given $X_w=e_i$. Indeed, the conditional mean of this vector valued variable satisfies
$$
\E[X_u|X_w=e_i]\;=\;0\cdot \P(X_u=0|X_w=e_i)+\sum_{j=1}^{r-1}e_j\cdot \P(X_u=e_j|X_w=e_i).
$$
Similarly, $T(0)$ is the vector whose entries are the conditional probabilities of $X_u$ being $1,2,\ldots,r-1$ given $X_w=0$. We have $T(x_w)=Ax_w+b$ where $b=T(0)$ and the columns of $A$ are the vectors $T(e_i)-T(0)$.
\end{ex}

The fact that Gaussian undirected tree models fall into this category follows because the undirected graphical model over $T^*$ can be represented by a system of linear equations on the rooted version of $T^*$, where for each edge $u\to v$ we have $X_v=\lambda_{uv}X_u+\epsilon_v$ with $\lambda_{uv}\in \R$ and $\epsilon_{v}$ being zero-mean Gaussian and independent of each other. Then $\E[X_v|X_u]=\lambda_{uv}X_u$ is linear.

In general, let each variable $X_u$ for $u\in V$ be modelled as a random vector in $\R^{r-1}$ for a fixed $r$. Each variable can be either discrete or continuous but we add a requirement that:\\[-.1cm]
\begin{enumerate}
	\item [(AL1)] The matrix $\Sigma_{vv}=\E X_v X_v^T-\E X_v (\E X_v)^T$ is positive definite for every $v\in V$. \\[-.1cm]
	\end{enumerate}
This assumption has an analogous role as assumption (A1) for general Markov models.

To complete the model description we also assume that the corrupted version $X^e$ of $X$ depends in a linear way on $X$ in the sense that $\E(X_u^e|X_u)$ is an affine function of $X_u$ for every $u\in V$. As always, we assume that $X_i^e$ depends on $X$ only through the value of $X_i$. If $X$ follows a linear model on the tree $T^*$ then $(X,X^e)$ follows a linear model on the corresponding tree $T^e$. Our goal is to show that in this case we can recover the equivalence class $[T^*]$ from the distribution of $X^e$.

\subsection{The induced tree metric}

The distribution of $X^e$ gives a tree metric on $T^e$. Define the normalized version $\bar X_v$ of $X_v$ as $$\bar X_v:=(\Sigma_{vv})^{-1/2}(X_v-\E X_v).$$  Denoting $\Sigma_{uv}=\E X_u X_v^T-\E X_u (\E X_v)^T$ we obtain
\begin{equation}\label{eq:condexp}
	\E[\bar X_u|X_v]\;\;=\;\;\Sigma_{uu}^{-1/2}\Sigma_{uv}\Sigma_{vv}^{-1/2}\,\bar X_v,
\end{equation}
where we used that $\E[X_u|X_v]$ is affine, or equivalently, that $\E[X_u|X_v]=\E X_u+\Sigma_{uv}\Sigma_{vv}^{-1}(X_v-\E X_v)$. In analogy to (A2) we assume:\\[-.1cm]
\begin{enumerate}
	\item [(AL2)] For each edge $u\to v$ the matrix $\Sigma_{uv}$ is invertible and $\Sigma_{vv}\neq \Sigma_{vu}\Sigma_{uu}^{-1}\Sigma_{uv}$.\\[-.1cm]
\end{enumerate}
To see that Condition (AL2) is analogous to (A2) note that if $\Sigma_{vv}= \Sigma_{vu}\Sigma_{uu}^{-1}\Sigma_{uv}$ then $$A\;:=\;\Sigma_{uu}^{-1/2}\Sigma_{uv}\Sigma_{vv}^{-1/2}\;=\;(\Sigma_{vv}^{-1/2}\Sigma_{vu}\Sigma_{uu}^{-1/2})^{-1}.$$
	By (\ref{eq:condexp}) it follows that $\E[\bar X_u|\bar X_v]=A\bar X_v$ and $\E[\bar X_v|\bar X_u]=A^{-1}\bar X_u$. But this implies that $\E[\E(\bar X_u|\bar X_v)|\bar X_u]=\bar X_u$ and so $X_v$ contains all the information to fully recover $X_u$ and the other way around; meaning that these two variables are functionally related.

Define
$$\tau_{uv}:=\det(\Sigma_{uu}^{-1/2}\Sigma_{uv}\Sigma_{vv}^{-1/2})=\det (\E[\bar X_u \bar X_v^T]).$$

\begin{prop}\label{prop:taulinear}
	Under assumptions (AL1) and (AL2),  $\tau_{ij}^2\in (0,1)$ and  $d_{ij}=-\log\tau_{ij}^2$ defines a tree metric.
\end{prop}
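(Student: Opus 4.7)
The plan is to reduce the claim to two ingredients: the bound $\tau_{uv}^2\in(0,1)$ for edges of $T^e$, and the path-product identity $\tau_{ij}=\prod_{(u,v)\in\overline{ij}}\tau_{uv}$ on the tree. Once both are in hand, the argument mirrors the discrete case discussed after equation~(\ref{eq:qij}): setting $d_{ij}=-\log\tau_{ij}^2$, each $d_{uv}$ is strictly positive on edges, the path-product identity becomes additivity of $d$ along paths, and $D=[d_{ij}]$ is a tree metric in the usual sense.

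For the first ingredient I would work with the normalized cross-moment matrix $A:=\mathbb{E}[\bar X_u\bar X_v^T]=\Sigma_{uu}^{-1/2}\Sigma_{uv}\Sigma_{vv}^{-1/2}$, which is well-defined by (AL1). For any unit vectors $a,b\in\mathbb{R}^{r-1}$, Cauchy--Schwarz gives
$$
|a^T A b|\;=\;|\mathbb{E}[(a^T\bar X_u)(b^T\bar X_v)]|\;\leq\;\sqrt{a^Ta\cdot b^Tb}\;=\;1,
$$
so every singular value of $A$ lies in $[0,1]$, hence $\tau_{uv}^2=(\det A)^2\leq 1$. The invertibility of $\Sigma_{uv}$ in (AL2) rules out $\det A=0$, and equality $|\det A|=1$ would force $A$ to be orthogonal, i.e.\ $AA^T=I$, which after unwrapping the normalizations reads $\Sigma_{vv}=\Sigma_{vu}\Sigma_{uu}^{-1}\Sigma_{uv}$ and is excluded by (AL2). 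So $\tau_{uv}^2\in(0,1)$ on edges.

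The second ingredient is where the linearity of the model does its work. For any intermediate node $w$ on the path between $i$ and $j$, the Markov property gives $X_i\indep X_j\mid X_w$, so
$$
\mathbb{E}[X_iX_j^T\mid X_w]\;=\;\mathbb{E}[X_i\mid X_w]\,\mathbb{E}[X_j\mid X_w]^T.
$$
Because the linear-model assumption forces $\mathbb{E}[X_i\mid X_w]$ to be affine in $X_w$ and (AL1) pins down the affine map uniquely as $\mu_i+\Sigma_{iw}\Sigma_{ww}^{-1}(X_w-\mu_w)$, taking total expectation (after centering) yields $\Sigma_{ij}=\Sigma_{iw}\Sigma_{ww}^{-1}\Sigma_{wj}$. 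Inserting $\Sigma_{ww}^{-1}=\Sigma_{ww}^{-1/2}\Sigma_{ww}^{-1/2}$ factorises the normalized matrix as $\Sigma_{ii}^{-1/2}\Sigma_{ij}\Sigma_{jj}^{-1/2}=(\Sigma_{ii}^{-1/2}\Sigma_{iw}\Sigma_{ww}^{-1/2})(\Sigma_{ww}^{-1/2}\Sigma_{wj}\Sigma_{jj}^{-1/2})$, and taking determinants gives $\tau_{ij}=\tau_{iw}\tau_{wj}$. Iterating along the path produces the full product formula.

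Combining the two ingredients, for any $i\neq j$ in $T^e$ one has $\tau_{ij}^2=\prod_{(u,v)\in\overline{ij}}\tau_{uv}^2\in(0,1)$, so $d_{ij}>0$, and taking logarithms turns the product into additivity along paths, which is exactly the definition of a tree metric on $T^e$. The main obstacle I anticipate is the justification that $\mathbb{E}[X_i\mid X_w]=\mu_i+\Sigma_{iw}\Sigma_{ww}^{-1}(X_w-\mu_w)$ from the bare assumption that this conditional expectation is affine: the standard trick is to observe that the optimal affine predictor in $L^2$ must have this form when $\Sigma_{ww}$ is invertible, and that under our assumption this affine predictor coincides with the true conditional expectation, so the Markov factorization and the tower property can then be safely combined.
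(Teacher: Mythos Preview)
Your proposal is correct and follows the same two-step structure as the paper: establish $\tau_{uv}^2\in(0,1)$ on edges, then derive the path-product formula from conditional independence and the tower property, so that additivity of $d_{ij}=-\log\tau_{ij}^2$ along paths is immediate.

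The one substantive difference is in how you obtain $\tau_{uv}^2<1$. The paper invokes Everitt's inequality for the block covariance matrix $\Sigma$ to get $(\det\Sigma_{uv})^2\leq \det\Sigma_{uu}\det\Sigma_{vv}$, with equality precisely when $\Sigma_{vv}=\Sigma_{vu}\Sigma_{uu}^{-1}\Sigma_{uv}$. Your Cauchy--Schwarz argument is a more elementary and self-contained route to the same endpoint: bounding $|a^TAb|$ by $1$ for unit $a,b$ forces all singular values of $A=\Sigma_{uu}^{-1/2}\Sigma_{uv}\Sigma_{vv}^{-1/2}$ into $[0,1]$, and the equality case ($A$ orthogonal, hence $A^TA=I$) unwinds to the same excluded condition. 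This is in fact a direct proof of Everitt's inequality in the invertible case, so the two arguments are equivalent but yours avoids the external citation. (A tiny slip: $AA^T=I$ literally gives $\Sigma_{uu}=\Sigma_{uv}\Sigma_{vv}^{-1}\Sigma_{vu}$; the condition you wrote comes from $A^TA=I$, but of course both hold once $A$ is orthogonal.) For the path-product step your derivation via $\Sigma_{ij}=\Sigma_{iw}\Sigma_{ww}^{-1}\Sigma_{wj}$ matches the paper's computation with $\E[\bar X_u\bar X_w^T]$ essentially verbatim.
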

\begin{proof}
We first show that $\tau_{uv}^2\in (0,1)$ for all edges $u\to v$. The fact that $\tau_{uv}$ cannot be zero follows immediately from (AL1) and (AL2). To show $\tau^2_{uv}<1$, equivalently we need to show that $\det\Sigma_{uv}^2< \det\Sigma_{uu}\det\Sigma_{vv}$. By applying Everitt's inequality \cite[Theorem 1]{everitt1958note} to the $2(r-1)\times 2(r-1)$ covariance matrix
$$
\Sigma\;=\;\begin{bmatrix}
	\Sigma_{uu} & \Sigma_{uv}\\
	\Sigma_{vu} & \Sigma_{vv}
\end{bmatrix}
$$
we conclude that $\det\Sigma_{uv}^2\leq  \det\Sigma_{uv}\det\Sigma_{vv}$ with equality if and only if  $\Sigma_{vv}=\Sigma_{vu}\Sigma_{uu}^{-1}\Sigma_{uv}$. By (AL2) this last condition cannot hold, proving that $\tau^2_{uv}<1$ for all edges $u\to v$.

We now show that $\tau_{ij}^2\in (0,1)$ for all $i,j$ and that the distances $d_{ij}=-\log\tau_{ij}^2$ form a tree matrix. Let $X_u,X_v,X_w$ be three random variables with values in $\R^{r-1}$ such that $X_u\indep X_w|X_v$. By the law of total expectation
$$
\E[\bar X_u \bar X_w^T]=\E\left[\E[\bar X_u|X_v](\E[ \bar X_w^T|X_v])^T\right]=\Sigma_{uu}^{-1/2}\Sigma_{uv}\Sigma_{vv}^{-1}\Sigma_{vw}\Sigma_{ww}^{-1/2},
$$
which implies that $\tau_{uw}=\det (\E[\bar X_u \bar X_w^T])=\tau_{uv}\tau_{vw}$. Applying this argument recursively we  conclude that the path-product decomposition of $\tau_{ij}$ given in~(\ref{eq:qij}) holds for any linear latent tree model; c.f. \cite{zwiernik2018latent} for more details. This implies that $\tau_{ij}^2\in (0,1)$ and that the collection of distances  $d_{ij}=-\log \tau_{ij}^2$ for all $i,j\in L$ gives a tree metric.
\end{proof}

Proposition~\ref{prop:taulinear} and Theorem~\ref{th:buneman} give immediatelly the following result.
\begin{thm}\label{th:mainlinear}
	If $(X,X^e)$ follows a linear model on $T^e$ and (AL1), (AL2) hold then $T^*$ can be recovered from the distribution of $X^e$ up to the equivalence class $[T^*]$.
\end{thm}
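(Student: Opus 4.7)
The plan is to mimic the strategy used for Theorem~\ref{th:main} (the discrete general Markov case), with Proposition~\ref{prop:taulinear} playing the role of the path-product formula (\ref{eq:qij}) and Buneman's theorem replacing Theorem~\ref{th:chang} for the tree recovery step. The only thing that changes in the linear setting is that we no longer have the machinery of stochastic transition matrices and Chang's identifiability theorem; but since our identifiability up to $[T^*]$ is really a statement about recovering a tree from a tree metric on its leaves, Buneman's theorem is enough.

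First, I would observe that the hypothesis that $(X, X^e)$ follows a linear model on $T^e$ together with (AL1) and (AL2) (applied to every edge of the rooted tree $T^e$) places us exactly in the setting of Proposition~\ref{prop:taulinear}. Hence the quantities $\tau_{uv}$ defined for every pair of vertices $u, v$ of $T^e$ satisfy $\tau^2_{uv} \in (0,1)$, and the path-product formula
$$
\tau_{ij} \;=\; \prod_{(u,v) \in \overline{ij}} \tau_{uv}
$$
holds across $T^e$. Consequently, $d_{uv} = -\log \tau_{uv}^2 > 0$ for every edge of $T^e$, and the induced pairwise distances $d_{ij}$ between the leaves of $T^e$ (which are precisely the noisy observable coordinates $1^e, \ldots, d^e$) form an additive tree metric realized by $T^e$.

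Next, I would pass from $T^e$ to $\overline{T^e}$ by suppressing the degree-two nodes as in Definition~\ref{def:contract}. Since edge lengths are additive along paths, the same leaf-to-leaf distance matrix $D = [d_{ij}]_{i,j \in L^e}$ is realized by $\overline{T^e}$ with strictly positive edge lengths (the length of a new edge being the sum of the lengths of the suppressed edges). By construction $\overline{T^e}$ has no degree-two nodes. I can therefore invoke Buneman's Theorem~\ref{th:buneman}: $\overline{T^e}$ is uniquely recoverable from $D$, and $D$ is in turn computable from the distribution of $X^e$ via the determinantal formula defining $\tau_{ij}$.

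Finally, by Proposition~\ref{prop_uptolabel}, two trees $S, T^*$ on vertex set $V$ satisfy $\overline{S^e} = \overline{T^e}$ if and only if $S \in [T^*]$. So recovering $\overline{T^e}$ from the distribution of $X^e$ is exactly the same as recovering $T^*$ up to the equivalence class $[T^*]$, which is the claim. The only subtlety I would watch out for is that Buneman requires strictly positive edge lengths on $\overline{T^e}$; this is ensured by (AL2), since by Proposition~\ref{prop:taulinear} it yields $\tau_{uv}^2 < 1$ on every edge of $T^e$, hence strictly positive $d_{uv}$ on every edge of $\overline{T^e}$. I do not foresee a genuine obstacle here: all the heavy lifting in the linear setting was done in Proposition~\ref{prop:taulinear}, and the present theorem is a clean combination of that proposition with Buneman's theorem and the combinatorial identification of $[T^*]$.
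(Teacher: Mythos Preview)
Your proposal is correct and matches the paper's approach exactly: the paper states that Theorem~\ref{th:mainlinear} follows ``immediately'' from Proposition~\ref{prop:taulinear} and Theorem~\ref{th:buneman}, and you have spelled out precisely that deduction (including the passage to $\overline{T^e}$ and the identification via Proposition~\ref{prop_uptolabel}). If anything, you have supplied more detail than the paper itself.
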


A special case of this set-up is given by the Gaussian model on $T^*$. We note that in the Gaussian case the Chow-Liu algorithm also boils down to computing the minimum weight spanning tree of the complete graph with weights $\hat d_{ij}=-\log \hat\rho^2_{ij}$. This shows that Theorem~\ref{th:CL} extends to this case.
\begin{prop}
	Theorem~\ref{th:CL} holds also for the Gaussian model on $T^*$.
\end{prop}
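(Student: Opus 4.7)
The plan is to verify that the ingredients of the proof of Theorem~\ref{th:CL} carry over to the (scalar) Gaussian setting, so that the same combinatorial argument applies verbatim. First I would note that when each $X_i$ is a univariate Gaussian and $X$ is Markov on $T^*$, the quantity $\tau_{uv}$ of Section~\ref{sec:distance} coincides with the Pearson correlation $\rho_{uv}$, and the path-product identity
$$
\tau_{ij}\;=\;\prod_{(u,v)\in\overline{ij}}\tau_{uv}
$$
follows as the scalar special case of Proposition~\ref{prop:taulinear}. Consequently $d_{ij}=-\log\tau_{ij}^2$ defines a tree metric on $T^*$. Applying the same Proposition to the augmented jointly Gaussian vector $(X,X^e)$ on $T^e$, and using that the path from $i^e$ to $j^e$ in $T^e$ passes through $i$ and $j$, one obtains $\bar{\tau}_{ij}=\tau_{i\,i^e}\,\tau_{ij}\,\tau_{j\,j^e}$ and therefore the additive decomposition
$$
\bar d_{ij}\;=\;d_{ij}+\ell_i+\ell_j,
$$
which is exactly the structure on which the proof of Theorem~\ref{th:CL} rests.

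Second, I would verify the standing hypothesis of Theorem~\ref{th:CL}: mutual information must be a strictly decreasing function of $d_{ij}$. For bivariate Gaussian variables the classical identity $I(X_i,X_j)=-\frac{1}{2}\log(1-\rho_{ij}^2)$ does the job. Setting $t=\rho_{ij}^2\in(0,1)$, we have $d_{ij}=-\log t$ and $I=-\frac{1}{2}\log(1-t)$; the second expression is strictly increasing in $t$, while $t\mapsto-\log t$ is a strictly decreasing bijection from $(0,1)$ onto $(0,\infty)$, so $I$ is a strictly decreasing function of $d_{ij}$.

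Finally, as noted immediately before the proposition, the maximum-likelihood Gaussian tree produced by Chow-Liu is the maximum weight spanning tree of the complete graph with weights $\hat{I}_{ij}$, equivalently the minimum weight spanning tree with weights $\hat d_{ij}=-\log\hat\rho_{ij}^2$. Consistency on noisy data is thus again the statement $T^*={\rm MWST}(\overline D)$ for $\overline D=[\bar d_{ij}]$, and the necessity argument via the cycle property together with the sufficiency argument comparing $\bar d_{ij}$ with $\max\{\bar d_{ik},\bar d_{jk}\}$ along paths in $T^*$ apply without change. The only real obstacle is notational, in matching the generic latent-tree quantities of Section~\ref{sec:distance} to their scalar-Gaussian incarnations; the probabilistic and combinatorial content of the argument is identical to the discrete case once the Gaussian mutual information identity is inserted.
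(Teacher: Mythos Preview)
Your proposal is correct and follows the same approach as the paper: the paper's justification is simply the sentence preceding the proposition, namely that in the Gaussian case Chow--Liu reduces to computing the minimum weight spanning tree with weights $\hat d_{ij}=-\log\hat\rho_{ij}^2$, so that the proof of Theorem~\ref{th:CL} applies verbatim. You have spelled out in more detail exactly the ingredients implicit in that remark---the Gaussian mutual information identity $I(X_i,X_j)=-\tfrac{1}{2}\log(1-\rho_{ij}^2)$ and the scalar instance of Proposition~\ref{prop:taulinear}---but there is no substantive difference in method.
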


\subsection{Continuous corruption Ising model}\label{sec:conntinuous}

The linear model framework not only generalizes the discrete results but also it greatly extends possible models of corruption for which identifiability can be assured. In this section we briefly discuss a simple model of continuous corruption for binary data. So suppose $\cX=\{0,1\}^d$ but $X^e$ is a continuous random variable with values in $[0,1]^d$. For example, in image analysis applications, $X$ could have values black/white with $X^e$ taking values on the grayscale.

Since every function of a binary variable $X_i$ is affine, $X_i^e$ could be an  arbitrary random variable whose definition depends on $X$ only through $X_i$. We call this a continuous corruption Ising model. Theorem~\ref{th:mainlinear} immediately gives the following result.

\begin{thm}
In the continuous corruption Ising model satisfying assumptions (AL1) and (AL2) we can identify $[T^*]$ from the correlation matrix of $X^e$. If the noise satisfies condition (A4), $T^*$ can be identified uniquely.	
\end{thm}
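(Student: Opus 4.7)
The plan is to recognize the continuous corruption Ising model as a special case of the linear latent tree model on $T^e$ and then read off both conclusions from the machinery of Section~\ref{sec:linear}. First I would check that $(X,X^e)$ is a linear model on $T^e$. By assumption $X$ follows an Ising model on $T^*$ and $X_i^e$ depends on $X$ only through $X_i$, so the joint vector $(X,X^e)$ is Markov with respect to $T^e$. Since each $X_i$ is binary, any real-valued function of $X_i$ is automatically affine in $X_i$; in particular $\E[X_v \mid X_u]$ is affine along every directed edge $u \to v$ of $T^*$, and $\E[X_i^e \mid X_i]$ is affine along each corruption edge $i \to i^e$. With $r=2$, so $r-1=1$, every variable takes values in $\R$, and all the matrices $\Sigma_{uu},\Sigma_{uv}$ reduce to scalar variances and covariances.

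Under (AL1) and (AL2), Proposition~\ref{prop:taulinear} then applies, so $d_{ij} = -\log \tau_{ij}^2$ is a tree metric on $T^e$. In the scalar case $\tau_{uv} = {\rm corr}(X_u,X_v)$, and in particular $\tau_{i^e j^e} = {\rm corr}(X_i^e, X_j^e)$ for all $i,j \in V$. Thus the restriction of this tree metric to the leaf set of $T^e$ is computable from the correlation matrix of $X^e$ alone. Buneman's Theorem~\ref{th:buneman} applied to $\overline{T^e}$ (which has no degree-two nodes and, by (AL1)--(AL2), strictly positive edge lengths) identifies $\overline{T^e}$, and Proposition~\ref{prop_uptolabel} translates this into the identification of $[T^*]$. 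This gives the first claim.

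For the second claim, the same leaf metric together with standard tree-metric reconstruction also recovers the length $d_{uv}$ of every edge of $\overline{T^e}$, in particular the lengths of the terminal edges at each mother node $u$ of $T^*$, namely $d_{u,u^e}$ and $d_{u,i^e} = d_{u,i} + d_{i,i^e}$ for each leaf $i$ of $T^*$ adjacent to $u$. Condition (A4) says that among these, $u^e$ is the unique leaf of $\overline{T^e}$ at minimum distance from $u$, so we can single out the terminal edge to $u^e$ and shrink it to recover $u$ as an internal node of $T^*$, following exactly the argument in the proof of Theorem~\ref{th:identifyexactly}. The only subtle point is the initial verification that the continuous corruption Ising model automatically fits the linear-model framework without further hypotheses; once this is done, the rest is a direct transfer from Theorem~\ref{th:mainlinear} and from the proof of Theorem~\ref{th:identifyexactly}.
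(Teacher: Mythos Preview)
Your proposal is correct and follows exactly the route the paper takes: the paper presents this theorem as an immediate corollary, with the only argument being the sentence preceding it (``since every function of a binary variable $X_i$ is affine\ldots'') and an appeal to Theorem~\ref{th:mainlinear}, while the (A4) clause is handled by the same shrink-the-shortest-terminal-edge reasoning as in Theorem~\ref{th:identifyexactly}. You have simply written out the details the paper leaves implicit, including the observation that in the scalar case $\tau_{i^ej^e}$ is the ordinary correlation, so the leaf metric is determined by the correlation matrix of $X^e$.
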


To conclude we provide some simulations  where the conditional distribution of $X^e_i$ given $X_i=k$ is ${\rm Beta}(\alpha^i_k,\beta^i_k)$, which is a natural and tractable choice for a distribution on $[0,1]$. This means that the density of $X_i^e$ given $X_i=k\in \{0,1\}$ is
$$
p_i(y|k)\;=\;\frac{\Gamma(\alpha^i_k+\beta^i_k)}{\Gamma(\alpha^i_k)\Gamma(\beta^i_k)}y^{\alpha^i_k}(1-y)^{\beta^i_k}\qquad y\in [0,1],
$$
where $\Gamma$ denotes the Gamma function. For identifiability purposes we assume
\begin{equation}\label{eq:assumptionAB}
	\frac{\alpha_0^i}{\beta_0^i}\;<\;1\;<\;\frac{\alpha_1^i}{\beta_1^i}.
\end{equation}

For any edge $u\to v$ in $T^*$ the conditional expectation $\E[X_v|X_u]$ is a linear function of $X_u$. Similarly,
$$
\E[X_i^e|X_i=k]\;=\;(1-k)\frac{\alpha_0^i}{\alpha_0^i+\beta_0^i}+k\frac{\alpha_1^i}{\alpha_1^i+\beta_1^i}
$$
and so we obtain a version of a linear tree model. Assumption~(\ref{eq:assumptionAB}) assures that $\E[X_i^e|X_i=0]<\tfrac{1}{2}$ and $\E[X_i^e|X_i=1]>\tfrac{1}{2}$.

Given a distribution $X$ on a tree $T^*$, we have computed the corrupted distribution $X^e$ assuming $\alpha_0^i=\beta_1^i=1$ and $\alpha_1^i=\beta_0^i=a$, where the parameter $a$ varies from $2$ (that corresponds to the length of the corrupted edge $\ell=1.2$) to $5$ ($\ell=0.16$) satisfying the condition \ref{eq:assumptionAB}. For each possible value of $\alpha_1^i=\beta_0^i=a$ we have generated $1000$ samples of size $1000$ of the vector $X^e$.

The NJ algorithm has been used to recover the tree $\overline{T^e}$ from the sample correlation matrix of $X^e$. We measure how far is the recovered tree from $\overline{T^e}$ using the normalized Robinson-Foulds distance introduced in Section \ref{sec:simulDiscrete}. In Figure \ref{fig:results_cont} we present the mean of the normalized Robinson-Foulds distance for the  samples on the three trees of Figure \ref{fig:trees_simul}. As in Section \ref{sec:simulDiscrete} we present the results for different tolerances $\varepsilon$. This tolerance $\varepsilon$ is set such that we shrink the internal edges of the tree produced by NJ if the estimated length of the edge is smaller than $\varepsilon$. We can observe that in this case, the performance is higher than for the discrete case.

Similarly as in Section~\ref{sec:identify}, we have also studies the problem of recovering a binary tree using the prior knowledge that $T^*$ is binary. In this case we get almost 100\% recovery rate as soon as $a\geq 3$.

\begin{figure}
	\centering
	\makebox[\textwidth][c]{\includegraphics[width=1.2\textwidth]{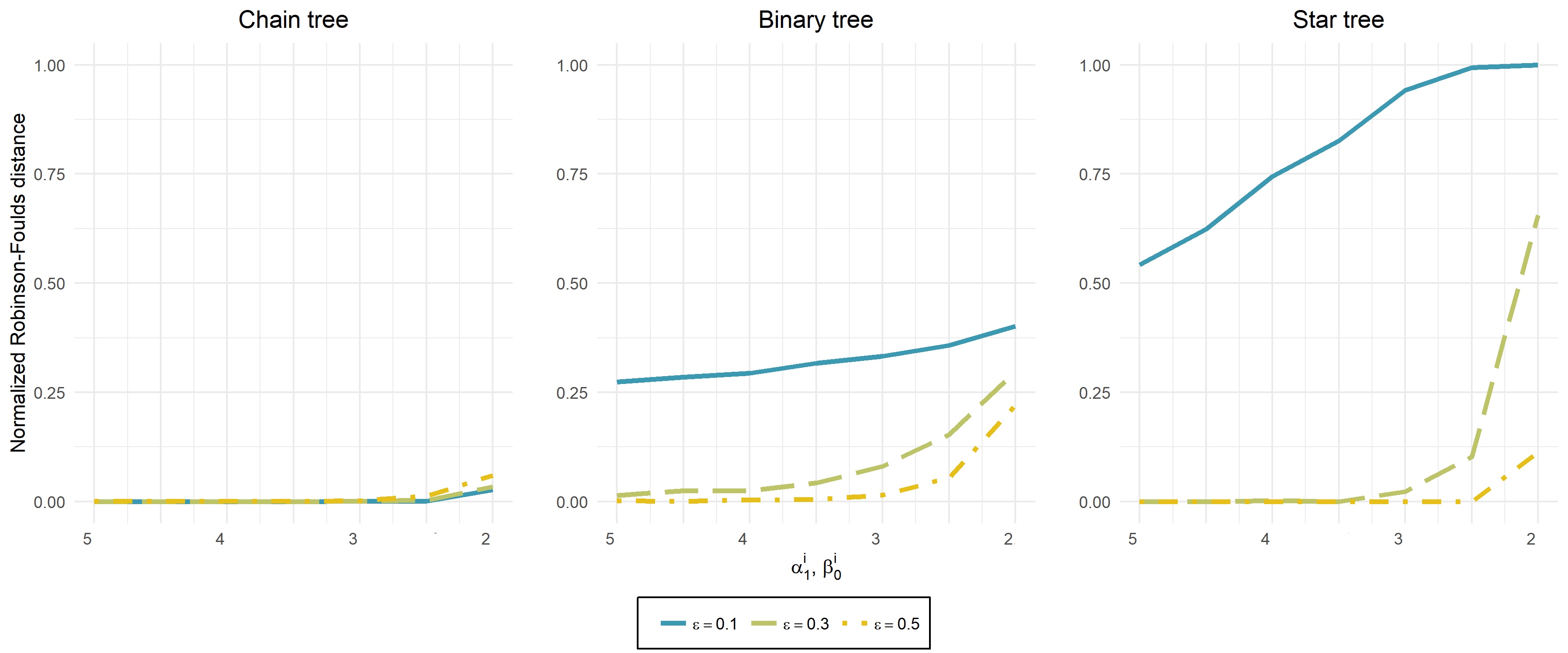}}
	
	\caption{\label{fig:results_cont}Performance of NJ on corrupted data on $r=2$ states simulated on the trees of Fig \ref{fig:trees_simul}. The lengths of the edges of $T^*$ have been set to $1$. Parameters $\alpha_0^i$ and $\beta_1^i$ are equal to $1$ while  $\alpha_1^i=\beta_0^i$ vary in the $x$-axis. The figures show the normalized Robinson-Foulds distance between the recovered tree and $\overline{T^e}$ for different thresholds $\varepsilon$ for which the internal branches are shrunk.}
\end{figure}



We also implemented the recovery of $T^*$ based on Theorem \ref{th:identifyexactly} for the same simulated data. In the case of the binary and the star tree, $T^*$ is successfully reconstructed $100\%$ of the times if $\alpha_1^i=\beta_0^i>2$ and more than $97\%$ of the times for $\alpha_1^i,\beta_0^i=2$, independently of the chosen tolerance $\varepsilon$. The percentage of times that the correct tree $T^*$ is recovered for the case of the chain tree is presented in Figure \ref{fig:chain_Tstar}.

\begin{figure}
	\makebox[\textwidth][c]{\includegraphics[width=0.7\textwidth]{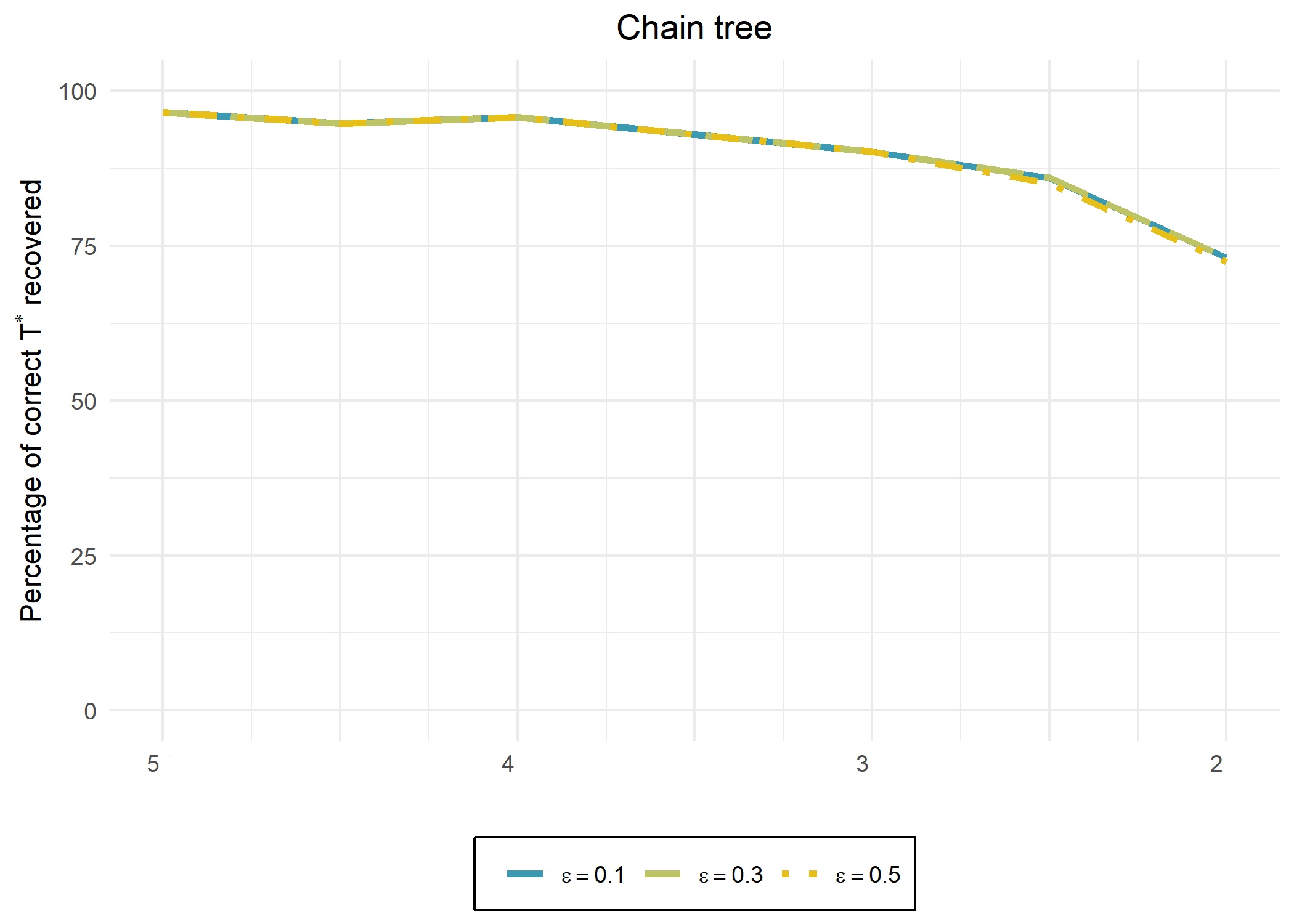}}
	\caption{\label{fig:chain_Tstar} Percentage of times that the chain tree $T^*$ is successfully reconstructed if $\overline{T^e}$ is correctly produced by NJ. Parameters $\alpha_0^i$ and $\beta_1^i$ are equal to $1$ while  $\alpha_1^i=\beta_0^i$ vary in the $x$-axis. The percentage of correct reconstructed trees $T^*$ are presented for different thresholds $\varepsilon$. }
\end{figure}

%
%

\small
\section*{Acknowledgements}

PZ was supported from the Spanish Government grants (RYC-2017-22544, PGC2018-101643-B-I00), and Ayudas Fundaci\'on BBVA a Equipos de Investigaci\'on Cientifica 2017. MC and MGL were partially supported by Spanish Government Secretar\'ia de Estado de Investigaci\'on, Desarrollo e Innovaci\'on (MTM2015-69135-P MINECO/FEDER, PID2019-103849GB-I00 MINECO) and Generalitat de Catalunya (2014 SGR-634).

\bibliographystyle{plain}
\bibliography{bib_mtp2}

\end{document}